\documentclass{article}
\usepackage{url}
\usepackage{graphicx}
\usepackage{amsmath}
\usepackage{amssymb}
\usepackage{algorithm}
\usepackage{algorithmic}
\usepackage{color}
\definecolor{Myblue}{rgb}{0,.3,.6}
\newcommand{\emc}[1]{{\textbf{\textit{\color{Myblue}#1}}}}

\DeclareMathOperator{\erf}{erf}
\DeclareMathOperator{\erfc}{erfc}
\DeclareMathOperator{\sign}{sign}

\DeclareMathOperator{\vecc}{vec}

\newcommand{\qed}{\hfill \ensuremath{\Box}} 
\newcommand{\argmin}{\operatornamewithlimits{argmin}}
\newcommand{\argmax}{\operatornamewithlimits{argmax}}
\newtheorem{theorem}{Theorem}
\newtheorem{lemma}[theorem]{Lemma}
\newtheorem{proposition}[theorem]{Proposition}

\newenvironment{proof}[1][Proof]{\begin{trivlist}
\item[\hskip \labelsep {\bfseries #1}]}{\end{trivlist}}

\newcommand{\linesep}{\begin{center} {\color{red}\line(1,0){300}} \end{center} \vspace{-0.4in} \begin{center} {\color{red}\line(1,0){250}} \end{center}}

\DeclareFontFamily{U}{wncy}{}
\DeclareFontShape{U}{wncy}{m}{n}{<->wncyr10}{}
\DeclareSymbolFont{mcy}{U}{wncy}{m}{n}
\DeclareMathSymbol{\Sh}{\mathord}{mcy}{"58} 

\definecolor{sabz}{rgb}{0.8, 1, 0.75}
\definecolor{golbehi}{rgb}{1, 0.94, 0.94}
\definecolor{ghaazi}{rgb}{0.85,1,0.95}
\definecolor{kerem}{rgb}{1,1,0.8}

\newcommand{\mycolorbox}[2]{
\vspace{0.1in}
\setlength{\fboxrule}{1pt}
\noindent \fcolorbox{black}{#1}{\parbox{\textwidth}{#2}}
\vspace{0.1in}
}

\usepackage[breaklinks=true,colorlinks]{hyperref}
\usepackage{epigraph}
\usepackage{algorithm}
\usepackage{algorithmic}
\usepackage{multicol}

\title{A Theory of Local Matching\\ SIFT and Beyond}

\author{
Hossein Mobahi\\
CSAIL, MIT\\
Cambridge, MA\\
\color{magenta}\texttt{hmobahi@csail.mit.edu} \\
\and
Stefano Soatto\\
CS Dept., UCLA \\
Los Angeles, CA\\
\url{soatto@cs.ucla.edu}
}

\date{}

\begin{document}
\maketitle

\begin{abstract}
Why has SIFT been so successful? Why its extension, DSP-SIFT, can further improve SIFT? Is there a theory that can explain both? How can such theory benefit real applications? Can it suggest new algorithms with reduced computational complexity or new descriptors with better accuracy for matching? We construct a general theory of local descriptors for visual matching. Our theory relies on concepts in energy minimization and heat diffusion. We show that SIFT and DSP-SIFT approximate the solution the theory suggests. In particular, DSP-SIFT gives a better approximation to the theoretical solution; justifying why DSP-SIFT outperforms SIFT. Using the developed theory, we derive new descriptors that have fewer parameters and are potentially better in handling affine deformations.
\end{abstract}

\section{Introduction}

\paragraph{\bf Questions: } Why has SIFT been so successful? Why DSP-SIFT \cite{Stefano15} can further improve SIFT? Is there a theory that can explain both? How can such theory benefit real applications? Can it suggest new algorithms with reduced computational complexity or new descriptors with better accuracy for matching?

\paragraph{\bf Contributions:} We construct a general theory of local descriptors for visual matching. Our theory relies on concepts in \emc{energy minimization} and \emc{heat diffusion}. We show that SIFT and DSP-SIFT approximate the solution the theory suggests. In particular, DSP-SIFT gives a better approximation to the theoretical solution; justifying why DSP-SIFT outperforms SIFT. We derive new algorithms based on this theory. Specifically, we present a computationally efficient approximation to DSP-SIFT algorithm \cite{Stefano15} by replacing the \emc{sampling} procedure in DSP-SIFT with a closed-form approximation that does not need any sampling. This leads to a significantly faster algorithm compared to DSP-SIFT. In addition, we derive \emc{new descriptors} directly from this theory. The new descriptors have \emc{fewer parameters} as well as the potential of better handling \emc{affine deformations}, compared to SIFT and DSP-SIFT.

\section{Contributions}

Throughout this text, isotropic multivariate Gaussian kernel and periodic univariate Gaussian are denoted by $k$ and $\tilde{k}$,

\begin{equation}
k_\sigma (\boldsymbol{x}) \triangleq (2 \pi \sigma^2)^{-\dim(\boldsymbol{x})} \, e^{-\frac{\| \boldsymbol{x}\|^2 }{2 \sigma^2}} \quad,\quad \tilde{k}_\sigma (\phi) \triangleq \sum_{k=-\infty}^\infty k_\sigma (\phi+ 2 \pi k)\,. 
\end{equation}

Consider an image $f$. Given an origin-centered detected key point $\boldsymbol{x}$ with assigned scale $\sigma$ and orientation $\beta$. The continuous form of a SIFT descriptor can be expressed as \cite{Dong15,Vedaldi10},

\begin{equation}
\label{eq:sift}
h_{SIFT}(\beta , \boldsymbol{x} ) \triangleq \int_{\mathcal{X}} \tilde{k}_{\sigma_r} (\beta - \angle \nabla f (\boldsymbol{y}) ) \, k_{\sigma_d} (\boldsymbol{y} - \boldsymbol{x}) \, \| \nabla f (\boldsymbol{y}) \| \, d \boldsymbol{y} \,,
\end{equation}

where $\sigma_r$ resembles the size of each orientation bin, e.g. $\frac{2\pi}{8}$ for $8$ bins. $\sigma_d$ determines the spatial support of the descriptor as a function of $\sigma$, e.g. $\sigma_d \triangleq 3 \sigma$. 

By observing that the above descriptor is \emc{pooling} (weighted averaging) across displacement, \cite{Dong15} adds domain size pooling to this construction and suggests,

\begin{equation}
\label{eq:dsp_sift}
h_{DSP}(\beta , \boldsymbol{x} ) \triangleq \int_{\mathcal{S}} \int_{\mathcal{X}} \tilde{k}_{\sigma_r} (\beta - \angle \nabla f (\boldsymbol{y}) ) \, k_{\sigma_d} (\boldsymbol{y} - \boldsymbol{x}) \, \| \nabla f (\boldsymbol{y}) \| \, d \boldsymbol{y} k_{\sigma_s}(\sigma_d - \sigma_{d_0}) \, d \sigma_d \,,
\end{equation}

where $\mathcal{S} \triangleq \mathbb{R}$ and $\sigma_{d_0}$ is a function of key point's scale $\sigma$, e.g. $\sigma_{d_0} \triangleq 3 \sigma$.

We develop a theory for descriptor construction by returning to the origin of the problem. Specifically we formulate matching as an energy optimization problem. It is known that the resulted cost function is \emc{nonconvex} for a any realistic matching setup \cite{Stefano15}. Ideally, one would need to \emc{brute-force search} across all possible transformations to find the right match. This is obviously not practical.

Recently a theory of nonconvex optimization by \emc{heat diffusion} has been proposed \cite{MobahiEMMCVPR15,Mobahi15AAAI}. The theory offers the best (in a certain sense) tractable solution for nonconvex problems. We show that SIFT and DSP-SIFT \emc{approximate} the energy minimization solution that this theory suggests. By leveraging this connection, we present the following contributions.

The domain-size integration (\ref{eq:dsp_sift}) is approximated by \emc{numerical sampling} in \cite{Dong15}, which is slow. Instead, we propose the following two \emc{closed-form approximations} to this integral.

\begin{equation}
h_{DSP}(\beta , \boldsymbol{x} ) \approx \int_{\mathcal{X}} \tilde{k}_{\sigma_r} (\beta - \angle \nabla f (\boldsymbol{y}) ) \, k_{\sigma_d} (\boldsymbol{y} - \boldsymbol{x}) \, \| \nabla f (\boldsymbol{y}) \| \, \frac{\sigma_d}{\sqrt{\sigma_d^2 + \|\boldsymbol{x}\|^2 \sigma^2_s  }} e^{ \frac{\sigma^2_s (\|\boldsymbol{x}\|^2-\boldsymbol{x}^T \boldsymbol{y}-\sigma_d^2)^2}{2 \sigma_d^2 (\sigma_d^2+\|\boldsymbol{x}\|^2 \sigma^2_s )}} \,.
\end{equation}

\begin{eqnarray}
h_{DSP}(\beta , \boldsymbol{x} ) \approx \int_{\mathcal{X}} \tilde{k}_{\sigma_r} (\beta - \angle \nabla f (\boldsymbol{y}) ) \, k_{\sigma_d} (\boldsymbol{y} - \boldsymbol{x}) \, \| \nabla f (\boldsymbol{y}) \| \, \frac{\sigma_d^2 - 
   \sigma_s \boldsymbol{x}^T \boldsymbol{y}}{(\sigma_d^2 + \sigma_s^2 \| \boldsymbol{x}\|^2)^\frac{3}{2}} e^{-\frac{\sigma_d^2 \| \boldsymbol{x} + \boldsymbol{y} \|^2 + \sigma_s^2 (\boldsymbol{x}^T \boldsymbol{y}^\perp)^2}{
  2 \sigma_d^2 (\sigma_d^2 + \sigma_s^2 \| \boldsymbol{x}\|^2 )}} \,. 
\end{eqnarray}

In addition, through this theory, we propose a \emc{new descriptor}. This descriptor is \emc{exact} in terms of what this theory suggests. In addition, this descriptor is derived from an \emc{affine matching} formulation, hence may better tolerate affine transforms than SIFT and DSP-SIFT\footnote{SIFT descriptor gains robustness against displacement by pooling across it. DSP-SIFT gains further robustness against scaling by scale pooling. However, none is robust to affine transform, which would require pooling across more parameters.}. Interestingly, despite handling a broader transformation space, it has fewer parameters than DSP-SIFT. Finally, it is \emc{analytical} and does not need any sampling.

\begin{eqnarray}
h_{heat} (\beta , \boldsymbol{x} ) &\triangleq& \int_{\mathcal{X}} \frac{e^{-\frac{(\boldsymbol{y}^T \tilde{\nabla} f(\boldsymbol{y}))^2}{2 \sigma_d^2 }}  w\big(- \frac{1}{2t} \, \tilde{\nabla}^T f(\boldsymbol{y})  \,(\, \sigma_d^{-2} \, \boldsymbol{y} \boldsymbol{x}^T \,+\, \sigma_a^{-2} \, \boldsymbol{I} \,\big)\, \tilde{\boldsymbol{v}}(\beta, \boldsymbol{y}) \big)}{\| \nabla f(\boldsymbol{y} ) \|^2 \, t^3} \nonumber \\
& & \quad \times k_{\sqrt{\sigma_d^2 + \sigma_a^2 \, \| \boldsymbol{x} \|^2}}(\frac{(\nabla f (\boldsymbol{y}))^T (\boldsymbol{x}- \boldsymbol{y})^\perp}{ \| \nabla f (\boldsymbol{y})\| })  \, d \boldsymbol{y} \,,
\end{eqnarray}

where $\tilde{\nabla} f (\boldsymbol{y}) \triangleq \frac{\nabla f(\boldsymbol{y})}{\| \nabla f(\boldsymbol{y})\|}$, $\tilde{\boldsymbol{v}}(\beta, \boldsymbol{y})\triangleq \frac{(\cos(\beta), \sin(\beta))}{\| \nabla f(\boldsymbol{y})\|}$, $t \triangleq \sqrt{\frac{(\boldsymbol{x}^T \tilde{\boldsymbol{v}}(\beta, \boldsymbol{y}))^2}{2 \sigma_d^2}+\frac{1}{2 \sigma_a^2 \, \| \nabla f(\boldsymbol{y}) \|^2}}$, $w(x) \triangleq \sqrt{\pi} e^{x^2} (1+2 x^2 ) \erfc(x) -2 x $, and $(a,b)^\perp \triangleq (b,-a)$.

Note that compared to DSP-SIFT, this descriptor \emc{reduces number of parameters} from two ($\sigma_r$ and $\sigma_s$) to one ($\sigma_a)$.

An illustration of how $h_{heat}$ differs from $h_{SIFT}$ is as follows. Consider a pair of images, namely image 1 and image 2, each consisting of two patches returned by some key point detector. The goal is to establish correspondence between patches using the $\ell_2$ distance between normalized descriptors,

\begin{equation}
d ( h_1 \,,\, h_2 ) \triangleq \int_0^{2 \pi} \int_{\mathcal{X}} \Big( \frac{h_1(\beta , \boldsymbol{x})}{\big(\int_0^{2 \pi} \int_{\mathcal{X}} h_1^2(\beta^\dag , \boldsymbol{x}^\dag) \, d \boldsymbol{x}^\dag \, d \beta^\dag\big)^{\frac{1}{2}}} - \frac{h_2(\beta , \boldsymbol{x})}{\big( \int_0^{2 \pi} \int_{\mathcal{X}} h_2^2(\beta^\dag , \boldsymbol{x}^\dag) \, d \boldsymbol{x}^\dag \, d \beta^\dag\big)^{\frac{1}{2}}} \Big)^2 \, d \boldsymbol{x} \, d \beta \,,
\end{equation}

where $\mathcal{X} \triangleq \mathcal{X}_1 \cap \mathcal{X}_2$. There are two possible matches: $P_A^1 \leftrightarrow P_A^2 \wedge P_B^1 \leftrightarrow P_B^2$ or $P_A^1 \leftrightarrow P_B^2 \wedge P_B^1 \leftrightarrow P_A^2$; obviously only the former is correct. Distance of matches using SIFT are listed in table \ref{tab:dist_SIFT_vs_Heat}. Note that SIFT descriptor attains lower distance for the wrong match and thus fails, while the heat descriptor finds the correct match. A visualization of SIFT descriptor and heat descriptor are presented in Figures \ref{fig:filter_response_sift} and \ref{fig:filter_response_heat} respectively.

\begin{table}
\begin{center}
\begin{tabular}{|c|c|c|}
\hline
& {\bf Correct}: $P_A^1 \leftrightarrow P_A^2 \wedge P_B^1 \leftrightarrow P_B^2$ & {\bf Wrong}: $P_A^1 \leftrightarrow P_B^2 \wedge P_B^1 \leftrightarrow P_A^2$\\
\hline\hline
SIFT & 0.20 & \bf 0.11 \\
\hline\hline
Heat & \bf 1.02 & 1.19 \\
\hline
\end{tabular}
\label{tab:dist_SIFT_vs_Heat}
\caption{Table shows total distance between wrongly matched patches and correctly matched patches. Correctly matched patches need to attain lower distance. SIFT fails to do that in this example, while the new descriptor succeeds.} 
\end{center}
\end{table}

\begin{figure}
\centering
\begin{tabular}{c c c c c }
\includegraphics[width=1in,height=1in]{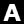} & \includegraphics[width=1in,height=1in]{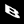} &
\hspace{0.2in} &
\includegraphics[width=1in,height=1in]{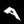} & \includegraphics[width=1in,height=1in]{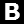}
\end{tabular}
\caption{The two patches on the left are considered to belong to image 1, and on the right to image 2.}
\label{fig:letters}
\end{figure}

\begin{figure}
\centering
\begin{tabular}{c c c c c c c c c}
\includegraphics[width=.4in,height=.4in]{img1_A.png} & \includegraphics[width=.4in,height=.4in]{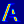} & \includegraphics[width=.4in,height=.4in]{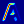} & \includegraphics[width=.4in,height=.4in]{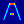} & \includegraphics[width=.4in,height=.4in]{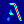} & \includegraphics[width=.4in,height=.4in]{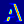} & \includegraphics[width=.4in,height=.4in]{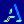} & \includegraphics[width=.4in,height=.4in]{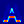} &
\includegraphics[width=.4in,height=.4in]{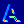} \\
\includegraphics[width=.4in,height=.4in]{img2_A.png} & \includegraphics[width=.4in,height=.4in]{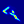} & \includegraphics[width=.4in,height=.4in]{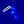} & \includegraphics[width=.4in,height=.4in]{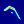} & \includegraphics[width=.4in,height=.4in]{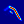} & \includegraphics[width=.4in,height=.4in]{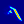} & \includegraphics[width=.4in,height=.4in]{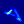} & \includegraphics[width=.4in,height=.4in]{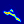} &
\includegraphics[width=.4in,height=.4in]{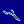} \\
\includegraphics[width=.4in,height=.4in]{img2_B.png} & \includegraphics[width=.4in,height=.4in]{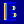} & \includegraphics[width=.4in,height=.4in]{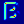} & \includegraphics[width=.4in,height=.4in]{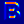} & \includegraphics[width=.4in,height=.4in]{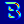} & \includegraphics[width=.4in,height=.4in]{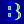} & \includegraphics[width=.4in,height=.4in]{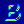} & \includegraphics[width=.4in,height=.4in]{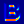} &
\includegraphics[width=.4in,height=.4in]{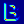} \\
\includegraphics[width=.4in,height=.4in]{img1_B.png} & \includegraphics[width=.4in,height=.4in]{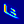} & \includegraphics[width=.4in,height=.4in]{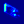} & \includegraphics[width=.4in,height=.4in]{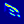} & \includegraphics[width=.4in,height=.4in]{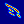} & \includegraphics[width=.4in,height=.4in]{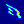} & \includegraphics[width=.4in,height=.4in]{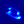} & \includegraphics[width=.4in,height=.4in]{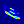} &
\includegraphics[width=.4in,height=.4in]{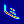} \\
Image & $\beta=0$ & $\beta=\frac{\pi}{8}$ & $\beta=\frac{2\pi}{8}$ & $\beta=\frac{3\pi}{8}$ & $\beta=\frac{4\pi}{8}$ & $\beta=\frac{5\pi}{8}$ & $\beta=\frac{6\pi}{8}$ & $\beta=\frac{7\pi}{8}$
\end{tabular}
\caption{Response map $h_{SIFT}(\beta,\,.\,)$ for different choices of $\beta$.}
\label{fig:filter_response_sift}
\end{figure}

\begin{figure}
\centering
\begin{tabular}{c c c c c c c c c}
\includegraphics[width=.4in,height=.4in]{img1_A.png} & \includegraphics[width=.4in,height=.4in]{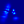} & \includegraphics[width=.4in,height=.4in]{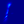} & \includegraphics[width=.4in,height=.4in]{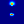} & \includegraphics[width=.4in,height=.4in]{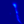} & \includegraphics[width=.4in,height=.4in]{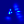} & \includegraphics[width=.4in,height=.4in]{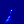} & \includegraphics[width=.4in,height=.4in]{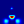} &
\includegraphics[width=.4in,height=.4in]{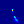} \\
\includegraphics[width=.4in,height=.4in]{img2_A.png} & \includegraphics[width=.4in,height=.4in]{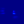} & \includegraphics[width=.4in,height=.4in]{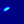} & \includegraphics[width=.4in,height=.4in]{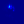} & \includegraphics[width=.4in,height=.4in]{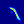} & \includegraphics[width=.4in,height=.4in]{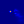} & \includegraphics[width=.4in,height=.4in]{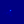} & \includegraphics[width=.4in,height=.4in]{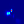} &
\includegraphics[width=.4in,height=.4in]{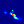} \\
\includegraphics[width=.4in,height=.4in]{img2_B.png} & \includegraphics[width=.4in,height=.4in]{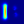} & \includegraphics[width=.4in,height=.4in]{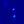} & \includegraphics[width=.4in,height=.4in]{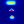} & \includegraphics[width=.4in,height=.4in]{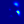} & \includegraphics[width=.4in,height=.4in]{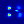} & \includegraphics[width=.4in,height=.4in]{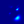} & \includegraphics[width=.4in,height=.4in]{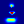} &
\includegraphics[width=.4in,height=.4in]{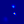} \\
\includegraphics[width=.4in,height=.4in]{img1_B.png} & \includegraphics[width=.4in,height=.4in]{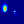} & \includegraphics[width=.4in,height=.4in]{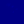} & \includegraphics[width=.4in,height=.4in]{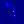} & \includegraphics[width=.4in,height=.4in]{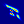} & \includegraphics[width=.4in,height=.4in]{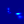} & \includegraphics[width=.4in,height=.4in]{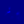} & \includegraphics[width=.4in,height=.4in]{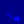} &
\includegraphics[width=.4in,height=.4in]{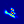} \\
Image & $\beta=0$ & $\beta=\frac{\pi}{8}$ & $\beta=\frac{2\pi}{8}$ & $\beta=\frac{3\pi}{8}$ & $\beta=\frac{4\pi}{8}$ & $\beta=\frac{5\pi}{8}$ & $\beta=\frac{6\pi}{8}$ & $\beta=\frac{7\pi}{8}$
\end{tabular}
\caption{Response map $h_{heat}(\beta,\,.\,)$ for different choices of $\beta$.}
\label{fig:filter_response_heat}
\end{figure}

Although this work focuses on SIFT, our diffusion theory can possible relate and extend other descriptors as well. For example, the recently developed distribution fields \cite{Erik13} are similar to (\ref{eq:sift}) and (\ref{eq:dsp_sift}), except that instead of histogram of gradient orientation, the histogram of intensity values are used,

\begin{equation}
h_{DF}(l , \boldsymbol{x} ) \triangleq \int_{\mathcal{X}} k_{\sigma_l} (l - f (\boldsymbol{y}) ) \, k_{\sigma_d} (\boldsymbol{y} - \boldsymbol{x}) \,  d \boldsymbol{y} \,,
\end{equation}

where $\sigma_l$ determines the smoothing strength of pixel intensity values. Similar to SIFT arguments, the convolution $k_{\sigma_d}$ may correspond to diffusion w.r.t. translation, and thus diffusion w.r.t. larger class of transformation, e.g., affine, may lead to geometrically more robust descriptors. Such extensions of distribution fields are not studied in the report, but are subject of future research.

\section{Matching as Energy Minimization}

For clarity of presentation, we focus on a restricted matching setup with simplifying assumptions. Nevertheless, this setup has enough complexity to make the point on nonconvexity and diffusion.

\subsection{Problem Setup}

\paragraph{\bf Notation:} An image is a map of form $f:\mathcal{X} \rightarrow [0,1]$, where $\mathcal{X} \subset \mathbb{R}^2$. Similarly, a patch is $p:\mathcal{P} \rightarrow [0,1]$, where $\mathcal{P} \subseteq \mathcal{X}$, i.e. the map is defined over a subset of the domain $\mathcal{X}$.

\paragraph{\bf Assumptions:} Given a set of patches $p_k: \mathcal{X}_k \rightarrow [0,1]$ for $k=1,\dots,n$. We assume that one of these patches, indexed by $k^*$, appears somewhere in $f$ up to a geometric transformation $\tau^*: \mathcal{X}_{k^*} \rightarrow \mathcal{X}$ and some reasonable intensity noise\footnote{In this setting each patch $p_k$ may be called a \emc{template}.},
\begin{equation}
\exists (k^*,\tau^*) \,\, \forall \boldsymbol{x} \in \mathcal{X}_{k^*} \,\,;\,\, f \big( \tau^*(\boldsymbol{x}) \big) \approx p_{k^*}(\boldsymbol{x}) \,\,.
\end{equation}

\paragraph{\bf Objective:} The goal is to estimate $(k^*,\tau^*)$. For tractability, the space of $\tau$ is \emc{parameterized} by a vector $\boldsymbol{\theta}$. For mathematical convenience, we assume the noise effect is best minimized via $\ell_2$ discrepancy,
\begin{equation}
\label{eq:matching_opt_integer}
(k^*,\boldsymbol{\theta}^*) \triangleq \argmin_{(k,\boldsymbol{\theta})} \int_{\mathcal{X}_k} \Big( f \big( \tau(\boldsymbol{x} \,;\, \boldsymbol{\theta}) \big) - p_k (\boldsymbol{x}) \Big)^2 \, d \boldsymbol{x} \,.
\end{equation}

The tools we later use apply to continuous variables, while (\ref{eq:matching_opt_integer}) involves the integer variable $k$. However, we can equivalently rewrite the problem in the following continuous form,

\begin{eqnarray}
& & (\boldsymbol{c}^*,\boldsymbol{\theta}^*) \triangleq \argmin_{(\boldsymbol{c},\boldsymbol{\theta})} \sum_k c^2_k \int_{\mathcal{X}_k} \Big( f \big( \tau(\boldsymbol{x} \,;\, \boldsymbol{\theta}) \big) - p_k (\boldsymbol{x}) \Big)^2 \, d \boldsymbol{x} \nonumber\\
\label{eq:matching_opt_cons}
\mbox{s.t} & & \sum_k c_k=1 \quad,\quad \forall k \,;\, c_k(1-c_k)=0\,.
\end{eqnarray}

\subsection{Intractability}

Despite simplicity of the setup, estimation of $(k^*,\boldsymbol{\theta}^*)$ is generally intractable because the optimization problem (\ref{eq:matching_opt_cons}) is \emc{nonconvex}. Hence, local optimization methods may converge to a \emc{local minimum}. In the following, we illustrate this by a toy example. The example involves a univariate signal $f(x)$, a pair of univariate templates $p_1(x)$ and $p_2(x)$ and a translation transform $\tau$ so that $f(\tau(x , \theta)) \triangleq f(x-\theta)$. Thus, (\ref{eq:matching_opt_cons}) can be expressed as below, after eliminating $c_2$ by the equality constraint $c_1+c_2=1$,

\begin{eqnarray}
(c_1^*,\theta^*) \triangleq \argmin_{(c_1,\theta)} & & c_1^2 \int_{\mathcal{X}_1} \big( f (x-\theta) - p_1 (x) \big)^2 \, d \boldsymbol{x} + (1-c_1)^2 \int_{\mathcal{X}_2} \big( f (x-\theta) - p_2 (x) \big)^2 \, d \boldsymbol{x}\nonumber\\
\label{eq:matching_opt_cons_toy}
\mbox{s.t} & & c_1(1-c_1)=0\,.
\end{eqnarray}

The solution $c_1^*$ determines to which template $f$ belongs to; $p_1$ if $c_1^*=1$ and $p_2$ if $c_1^*=0$.

We proceed by choosing $f$, $p_1$, and $p_2$ as the blue, green, and red curves in Figure \ref{fig:signals}-a. Here $\mathcal{X}=[-2,2]$ and $\mathcal{X}_1 = \mathcal{X}_2 = [-1.2,1.2]$. The goal is to slide the blue curve to the left or right, such that it coincides with either the green or red curve. Recall from (\ref{eq:matching_opt_cons}) that matching error is examined only over the support of the templates (gray shade). As shown in Figure \ref{fig:signals}-b, by sliding $f$ to the left by $\theta=0.25$ units, a perfect match with the green curve is achieved. However, there is no way to attain similar match with the red curve. Thus, by inspection we know that $\boldsymbol{c}^*=(1,0)$ and $\theta=0.25$.

\begin{figure}
\centering
\begin{tabular}{c c}
\includegraphics[width=2.3in,height=1in]{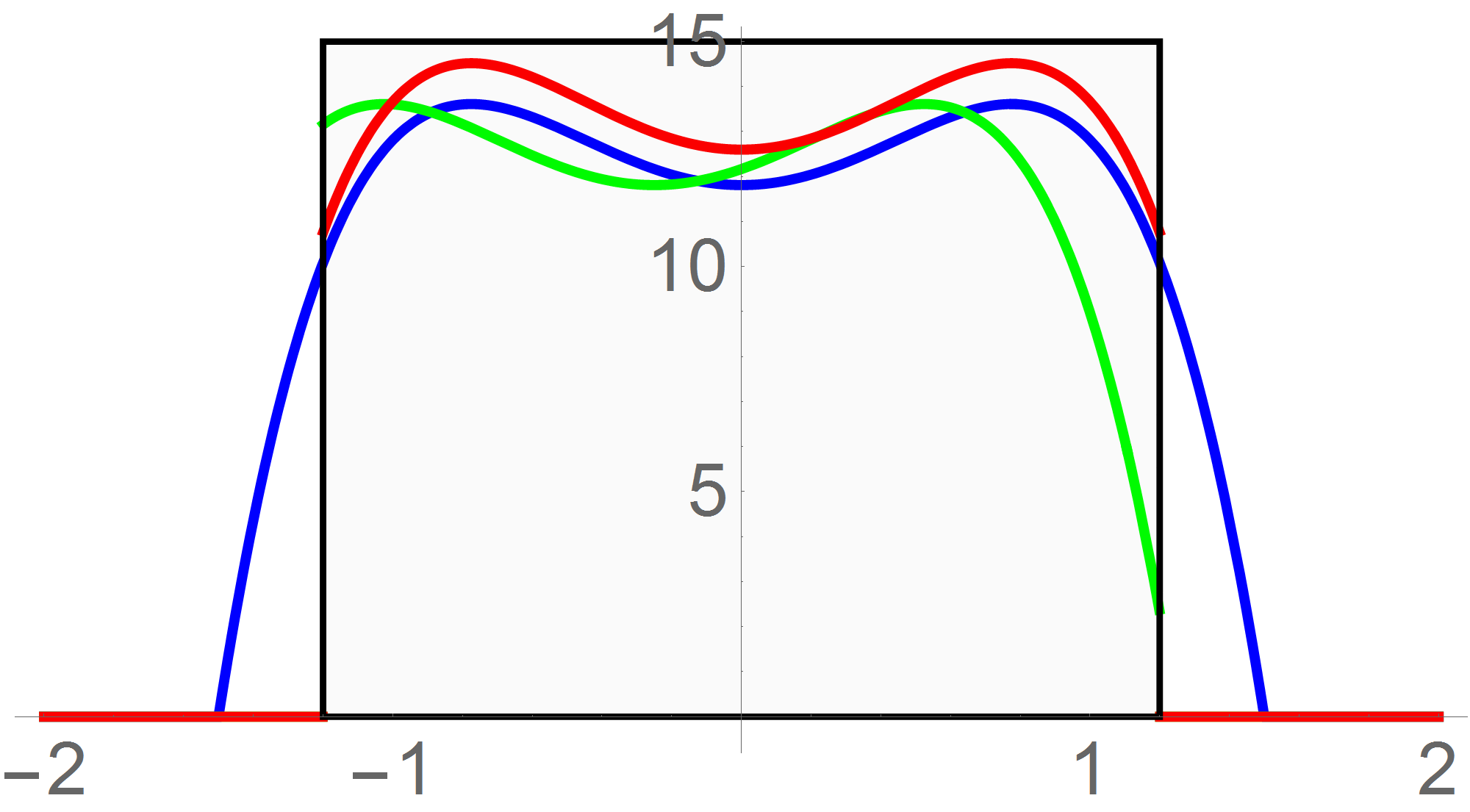} & \includegraphics[width=2.3in,height=1in]{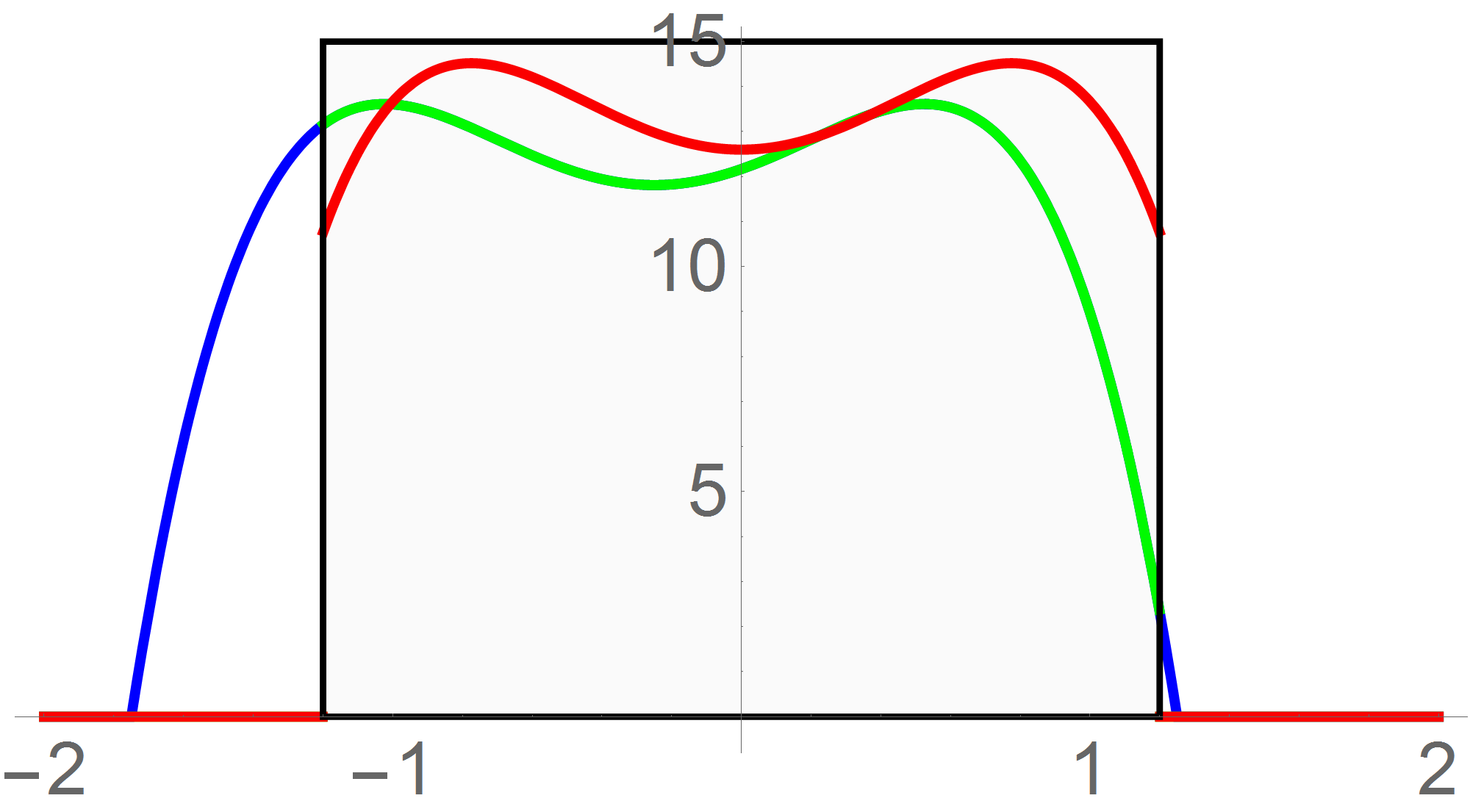}\\
(a) & (b)
\end{tabular}
\caption{Toy example of signal matching through shift.}
\label{fig:signals}
\end{figure}

For visualization purpose, we replace the equality in (\ref{eq:matching_opt_cons_toy}) by a quadratic penalty\footnote{Similar local minima could be obtained for the exact constrained optimization (\ref{eq:matching_opt_cons_toy}) using Lagrange multiplier technique.}. This encompasses both the objective and constraint into a single objective to be visualized. The resulted optimization landscape is shown in Figure \ref{fig:init_landscape}. A local minimum is apparent around $c_1=0, \theta=0$ while the global minimum is around $c_1=1,\theta=0.25$.

\begin{figure}
\centering
\begin{tabular}{c c}
\includegraphics[width=2.3in,height=1in]{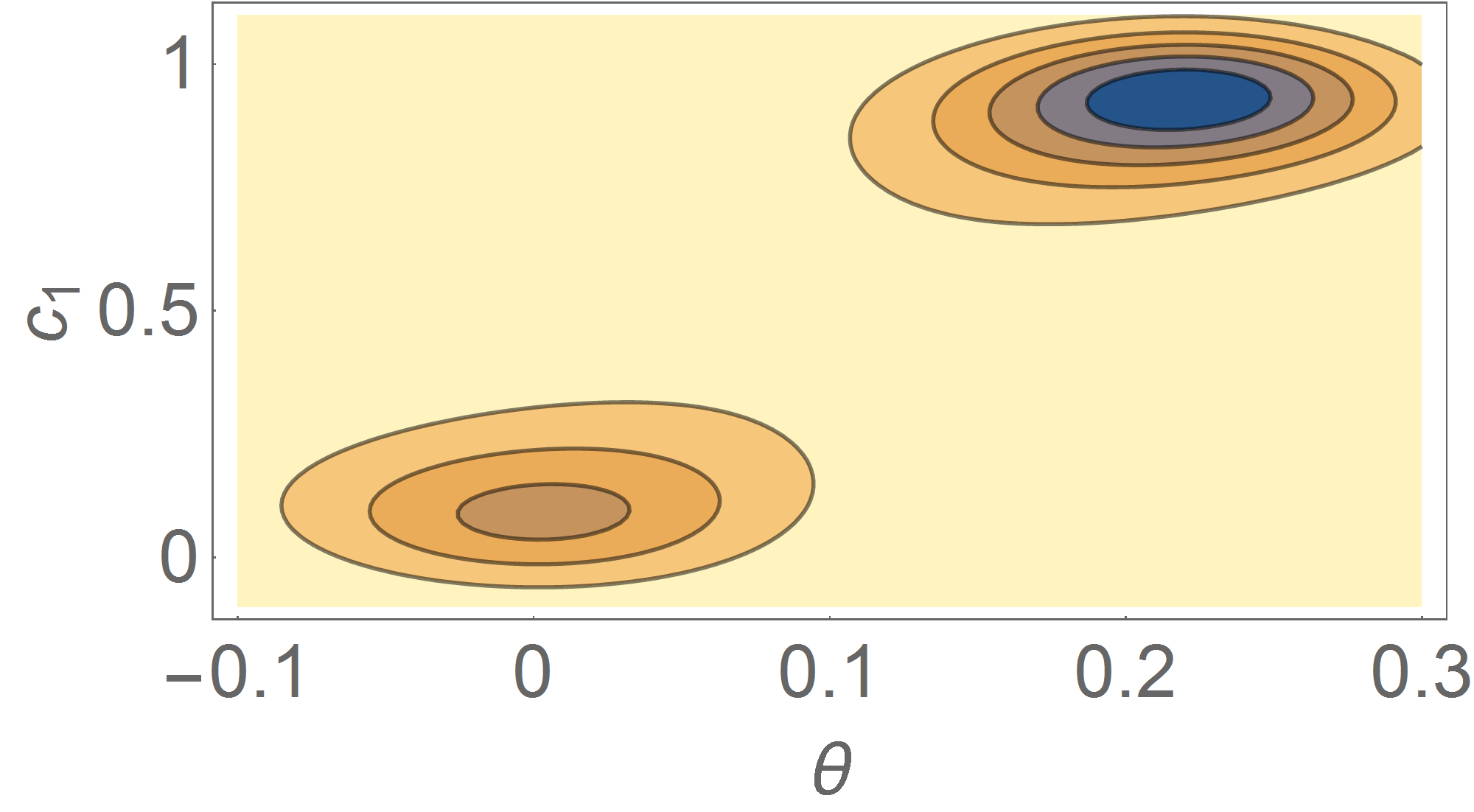} & \\
(a) & (b)
\end{tabular}
\caption{Objective landscape for the toy problem: signal matching through shift.}
\label{fig:init_landscape}
\end{figure}

\section{Diffusion}

One way to approximate the solution of a nonconvex optimization problem is by diffusion and the continuation method. The idea is to \emc{follow} the minimizer of the \emc{diffused} cost function while progressively transforming that function to the original nonconvex cost. It has recently been shown that, this procedure with the choice of the \emc{heat kernel} as the diffusion operator, provides the optimal transformation in a certain sense\footnote{It is shown that Gaussian convolution is resulted by the best affine approximation to a nonlinear PDE that generates the convex envelope. Note that computing the convex envelope of a function is generally intractable as well. Thus, it is not surprising that the associated nonlinear PDE lacks a closed form solution. However, by replacing the nonlinear PDE by its best affine approximation, we strike the optimal balance between tractability (closed form solution for the linear PDE) and accuracy of the approximation. The motivation for approximation the convex envelope is that the latter is an optimal object in several senses for the original nonconvex cost function. In particular, global minima of a nonconvex cost are contained in the global minima of its convex envelope.} \cite{MobahiEMMCVPR15}. In fact, some performance guarantees have been recently developed for this scheme \cite{Mobahi15AAAI}. The procedure is defined more formally below. Given an \emc{unconstrained} and \emc{nonconvex} cost function $h:\mathbb{R}^n \rightarrow \mathbb{R}$ to be minimized. Instead of applying a local optimization algorithm directly to $h$, we embed $h$ into a family of functions parameterized by $\sigma$,

\begin{equation}
\label{eq:gauss_smooth_cost}
g(\boldsymbol{x} \,;\, \sigma) \triangleq [h \star k_\sigma] (\boldsymbol{x})\,,
\end{equation}

where $\star$ is the convolution operator and $k_\sigma(\boldsymbol{x})$ is the Gaussian function with zero mean and covariance $\sigma^2 \boldsymbol{I}$. The Gaussian convolution appears here due to the known analytical solution form of the heat diffusion. Observe that $\lim_{\sigma \rightarrow \infty} g(\,.\, ;\sigma) = h(\boldsymbol{x})$. Thus by starting from a large $\sigma$ and shrinking it toward zero, a sequence of cost function converging to $h$ is obtained. The optimization process then follows the path of the minimizer of $g(\,.\, ; \sigma)$ through this sequence as listed in Algorithm \ref{alg:alg_goal}.

\begin{algorithm} [t]
\caption{Optimization by Diffusion and Continuation}
\label{alg:alg_goal}
\begin{algorithmic} [1]
\STATE Input: $f:\mathcal{X} \rightarrow \mathbb{R}$, Sequence $\sigma_0>\sigma_1>\dots>\sigma_n = 0$.
\STATE $\boldsymbol{x}_0=$ global minimizer of $g(\boldsymbol{x};\sigma_0)$.
\FOR {{$k=1$} \textbf{to} {$n$}}
\STATE $\boldsymbol{x}_{k}=$ Local minimizer of $g(\boldsymbol{x};\sigma_k)$, initialized at $\boldsymbol{x}_{k-1}$.
\ENDFOR
\STATE Output: $\boldsymbol{x}_n$
\end{algorithmic}
\end{algorithm}

Now let us revisit the problem (\ref{eq:matching_opt_cons_toy}). Like before, we use a quadratic penalty to obtain an unconstrained approximate to (\ref{eq:matching_opt_cons_toy}). A sequence of diffused landscapes of this problem is shown in Figure \ref{fig:annealed_landscaped}. Note that the problem becomes convex, with a unique strict minimizer at the large $\sigma$. The solution path originated from that point eventually lands at the global minimum in this example.

\begin{figure}
\centering
\begin{tabular}{c c c c}
\includegraphics[width=1.1in,height=1in]{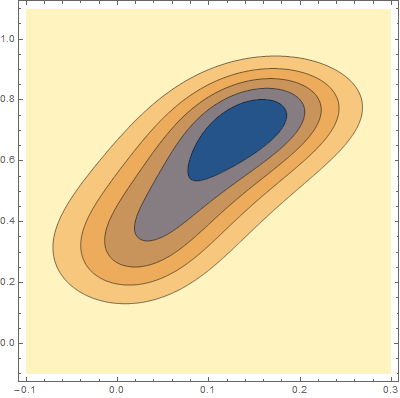} & 
\includegraphics[width=1.1in,height=1in]{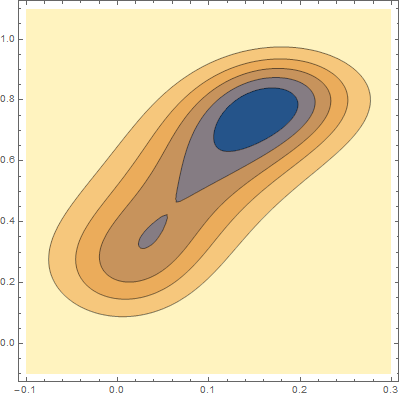} & \includegraphics[width=1.1in,height=1in]{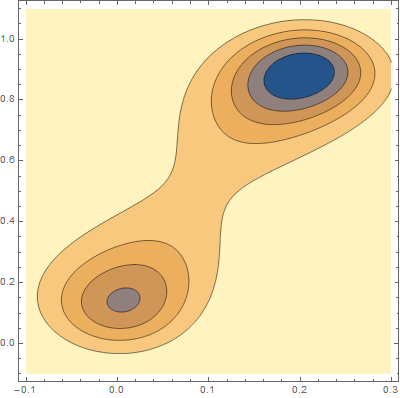} & 
\includegraphics[width=1.1in,height=1in]{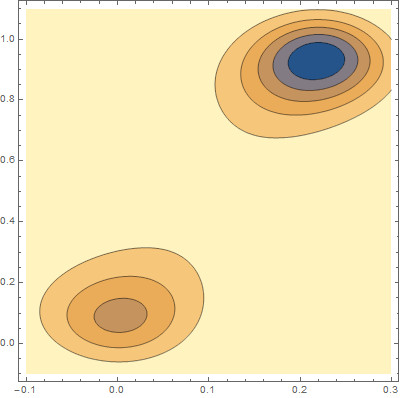}
\end{tabular}
\caption{From left to right: diffused cost functions from a large $\sigma$ toward $\sigma=0$ for the toy example.}
\label{fig:annealed_landscaped}
\end{figure}

\section{Deriving SIFT via the Diffusion Theory}
\label{sec:SIFT}

Instead of pixel intensity as (\ref{eq:matching_opt_cons}) to guide the matching, we switch to orientation of gradient. This change adds limited robustness to illumination changes \cite{Stefano15}. Nevertheless, the cost function remains nonconvex and difficult to minimize. Such nonconvex optimization may be treated via diffusion and continuation by the theory of \cite{MobahiEMMCVPR15}. We show that \emc{SIFT descriptor} emerges as an approximation to this process when $\tau$ is a \emc{similarity transformation}, i.e. $\tau(\boldsymbol{x} \,;\, \boldsymbol{\theta}) \triangleq e^s \boldsymbol{R}_\alpha \boldsymbol{x} + \boldsymbol{b}$, where $\boldsymbol{\theta} \triangleq (\alpha, s, \boldsymbol{b})$.

The approximation comes from two sources. First, the theory of \cite{MobahiEMMCVPR15} suggests a continuation method by gradually reducing $\sigma$ while following the path of the minimizer. SIFT provides an approximation to this process by solving the optimization at only one value of $\sigma$, i.e. it terminates after the first iteration of the algorithm suggested by \cite{MobahiEMMCVPR15}. Second, the cost function is diffused only w.r.t. a \emc{subset} of optimization variables ($\alpha$ and $\boldsymbol{b}$). This deviates from the theory in \cite{MobahiEMMCVPR15} that requires diffusion of the cost function in \emc{all variables}, i.e. to use $[cost \star k_\sigma](\boldsymbol{c}, \boldsymbol{\theta})$.

\subsection{Energy Function}

Define the density of gradient orientations of image $f$ as below,

\begin{equation}
\label{eq:def_gradient_orientation}
h (\beta , \boldsymbol{x} \,;\, f) \triangleq \Sh( \angle \nabla f(\boldsymbol{x}) - \beta ) \, \| \nabla f(\boldsymbol{x})\| \,,
\end{equation}

where $\Sh$ denotes the \emc{Dirac comb} of period $2\pi$, i.e. $\Sh(x) \triangleq \sum_{n=-\infty}^\infty \delta(x + 2 \pi n)$. The Dirac comb accounts for the periodicity of the angle (gradient orientation). Let the dissimilarity between a pair of density functions over a region $\mathcal{X}$ be expressed as the negated dot product,

\begin{eqnarray}
\label{eq:distance_functional}
d(f_1,f_2,\mathcal{X}) &\triangleq& -\int_0^{2\pi} \int_\mathcal{X} h (\beta , \boldsymbol{x} \,;\, f_1) \, \, h (\beta , \boldsymbol{x} \,;\, f_2) \,\, d \beta \, d \boldsymbol{x} \,.
\end{eqnarray}

In this setting, the problem of template matching is to find $\boldsymbol{\theta}$ such that the gradient orientations of $f ( \tau(\boldsymbol{x}\,;\, \boldsymbol{\theta}) )$ match that of a template,

\begin{eqnarray}
& & (\boldsymbol{c}^*,\boldsymbol{\theta}^*) \triangleq \argmin_{(\boldsymbol{c},\boldsymbol{\theta})} \sum_k c_k \, d(f \circ \tau_{\boldsymbol{\theta}} ,p_k, \mathcal{X}_k) \nonumber \\
\mbox{s.t} & & \sum_k c_k=1 \quad,\quad \forall k \,;\, c_k(1-c_k)=0 \,.
\end{eqnarray}

Replacing the equality constraint by some penalty function $q$ leads to the following \emc{unconstrained} optimization,

\begin{equation}
\label{eq:sift_uncons}
cost(\boldsymbol{c},\boldsymbol{\theta}) \triangleq q(\boldsymbol{c}) \,+\, \sum_k c_k \, d(f \circ \tau_{\boldsymbol{\theta}}, p_k,\mathcal{X}_k) \,.
\end{equation}

\subsection{Solution}
\begin{table}
\label{tab:side_by_side}

\begin{multicols}{3}
\tiny

\begin{algorithmic}[1]
\STATE Input: $\sigma_0>\sigma_1>\dots>\sigma_n = 0$.
\STATE $(\boldsymbol{\theta}_0,\boldsymbol{c}_0)= \argmin_{(\boldsymbol{\theta},\boldsymbol{c})}[cost \star k_{\sigma_0}](\boldsymbol{c},\boldsymbol{\theta})$.
\FOR {{$k=1$} \textbf{to} {$n$}}
\STATE $(\boldsymbol{\theta}_k,\boldsymbol{c}_k)=$ Local min of $[cost \star k_{\sigma_k}](\boldsymbol{c},\boldsymbol{\theta})$, initialized at $(\boldsymbol{\theta}_{k-1},\boldsymbol{c}_{k-1})$.
\ENDFOR
\STATE Output: $(\boldsymbol{\theta}_n,\boldsymbol{c}_n)$
\end{algorithmic}

\vfill
\columnbreak

\begin{algorithmic}[1]
\STATE Input: $\sigma_0$, $\Theta \triangleq \cup_j \{(\alpha_j, s_j , \boldsymbol{b}_j)\}$.
\STATE $(\boldsymbol{\theta}_0,\boldsymbol{c}_0)=\argmin_{(\boldsymbol{c},\alpha, s , \boldsymbol{b})}[cost(\boldsymbol{c},\,.\, s \,.\,) \star k_{\sigma_0}](\alpha, \boldsymbol{b})$ s.t. $(\alpha, s , \boldsymbol{b}) \in \Theta$.
\STATE Output: $(\boldsymbol{\theta}_0,\boldsymbol{c}_0)$
\end{algorithmic}

\vfill
\columnbreak

\begin{algorithmic}[1]
\STATE Input: $\sigma_0$, $\Theta \triangleq \cup_j \{(\alpha_j, s_j , \boldsymbol{b}_j)\}$.
\STATE $(j^*,k^*)=\argmax_{j,k} \int_0^{2\pi}  \int_{\mathcal{X}_k^\dag} h (\beta , \boldsymbol{x} \,;\, p_k) h_{SIFT}(\beta, \boldsymbol{x}\,,;\, f_j) \, d \boldsymbol{x} \, d \beta$ s.t. $(\alpha, s , \boldsymbol{b}) \in \Theta$.
\STATE Output: $(j^*,k^*)$
\end{algorithmic}

\end{multicols}
\caption{{\bf Left}: Ideal Minimization Strategy based on \cite{MobahiEMMCVPR15}. {\bf Middle}: Approximation due to a fixed $\sigma$ and partial diffusion. {\bf Right}: Equivalence with SIFT up to the approximation (\ref{eq:smoothed_sift}).}
\end{table}

The goal is to tackle the nonconvex problem (\ref{eq:sift_uncons}) using the diffusion and continuation theory of \cite{MobahiEMMCVPR15}. This would give the algorithm listed in Table \ref{tab:side_by_side}-left.

SIFT based matching can be derived by \emc{simplifying} this algorithm as described below,

\begin{itemize}
\item {\bf Partial Diffusion}: Instead of diffusion w.r.t. all variables $(\alpha,s,\boldsymbol{b},\boldsymbol{c})$, diffuse the energy function (\ref{eq:sift_uncons}) partially, i.e. only with respect to $(\alpha,\boldsymbol{b})$.

\item {\bf Fixed $\sigma$}: Instead of gradual refinement of the energy function by shrinking $\sigma$ toward zero, stick to a single choice $\sigma=\sigma_0$.

\item {\bf Limited Optimization}: Rather than searching the entire parameter space for $(\alpha, s , \boldsymbol{b})$ for the optimal solution, restrict to a small candidate set $\Theta \triangleq \cup_{j=1}^J \{ (\alpha_j, s_j , \boldsymbol{b}_j)\}$. This set is generated \emc{outside of the optimization} loop by a keypoint detector\footnote{The location and scale of candidate sets are determined by an \emc{interest point detector}, and the orientation angle is set to the \emc{dominant gradient direction}.}. Consequently, $\Theta$ does not necessarily contain the optimal parameter as keypoint estimation is done for each image in isolation and thus separately from the full matching problem. 

\end{itemize}

Applying these simplifications yields the algorithm in Table \ref{tab:side_by_side}-middle. The central optimization in this algorithm is the following,

\begin{equation}
\label{eq:sift_nested}
\min_{(\alpha, s , \boldsymbol{b}) \in \Theta} \min_{\boldsymbol{c}} \big[\, [cost \big(\boldsymbol{c},(\,.\,,s,\,.\,) \big) \star k_{\sigma_d}](\boldsymbol{b})\, \star \tilde{k}_{\sigma_r}\, \big] (\alpha) \,,
\end{equation}

where we have replaced the joint optimization $\min_{\boldsymbol{c},(\alpha, s , \boldsymbol{b}) \in \Theta}$ by the equivalent nested form $\min_{(\alpha, s , \boldsymbol{b}) \in \Theta} \min_{\boldsymbol{c}}$. The \emc{outer} minimization is trivial; it just loops over the candidates and evaluates the resulted cost to pick the best one. Below we only focus on the \emc{inner} optimization.

Assuming the penalty function $q(\boldsymbol{c})$ accurately enforces the constraint $c_k \in \{0,1\}$, the inner optimization becomes a \emc{winner take all} problem; the winning patch $p_k$ to match $f$ is the one which minimizes the following cost,

\begin{eqnarray}
\label{eq:sift_uncons_winner}
k^* &=& \argmin_k \big[\, [d(f \circ \tau_{(\,.\, , s , \,.\,)} ,p_k, \mathcal{X}_k) \star k_{\sigma_d}](\boldsymbol{b})\, \star \tilde{k}_{\sigma_r}\, \big] (\alpha) \\
&=& \argmax_k \int_0^{2\pi} \int_{\mathcal{X}_k} \Big( \big[\, [  h (\beta , \boldsymbol{x} \,;\, f \circ \tau_{(\,.\, , s , \,.\,)}) \star k_{\sigma_d}](\boldsymbol{b})\, \star \tilde{k}_{\sigma_r}\, \big] (\alpha) \Big) \, \, h (\beta , \boldsymbol{x} \,;\, p_k) \,\, d \beta \, d \boldsymbol{x} \,. \nonumber
\end{eqnarray}

We doubt that the convolutions in (\ref{eq:sift_uncons_winner}) are computationally tractable\footnote{We will later show in Section \ref{sec:affine} that by a different \emc{parameterization} of the geometric transform, we can handle a \emc{larger} class, namely the \emc{affine} transform, and yet are able to derive a \emc{closed form} expression for the convolution integrals.}. Thus, in order to derive a computationally tractable algorithm, we resort to a closed form approximation to the above convolutions. The approximation is stated in the following lemma.

\begin{lemma}

The following approximation holds,

\begin{eqnarray}
\label{eq:smoothed_sift}
& & \big[\, [  h (\beta , \boldsymbol{x} \,;\, f \circ \tau_{(\,.\, , s , \,.\,)}) \star k_{\sigma_d}](\boldsymbol{b})\, \star \tilde{k}_{\sigma_r}\, \big] (\alpha) \nonumber \\
&\approx&  \,-\, e^s \, \int_{\mathbb{R}^2} \tilde{k}_{\sigma_r} ( \angle \nabla f (\boldsymbol{y} ) - \alpha - \beta) \| \nabla f ( \boldsymbol{y} ) \| \, k_{\sigma_d} (\boldsymbol{y} - e^{s} \boldsymbol{R}_\alpha \boldsymbol{x} - \boldsymbol{b}) \, d \boldsymbol{y} \,. \nonumber
\end{eqnarray}
\end{lemma}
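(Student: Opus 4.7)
The plan is to substitute the similarity transform into the density (\ref{eq:def_gradient_orientation}), carry out the inner spatial convolution in $\boldsymbol{b}$ exactly via a translation change of variables, and then handle the outer angular convolution in $\alpha$ with a controlled approximation that decouples the action of $\boldsymbol{R}_\alpha$ on the spatial kernel from its action on the Dirac comb.

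First, I would apply the chain rule to $\nabla(f\circ\tau_{(\alpha,s,\boldsymbol{b})})$ with $\tau(\boldsymbol{x};\alpha,s,\boldsymbol{b})=e^s\boldsymbol{R}_\alpha\boldsymbol{x}+\boldsymbol{b}$: orthogonality of $\boldsymbol{R}_\alpha$ gives $\|\nabla(f\circ\tau)(\boldsymbol{x})\|=e^s\|\nabla f(\tau(\boldsymbol{x}))\|$, and multiplication by $\boldsymbol{R}_\alpha^T$ shifts every angle by $-\alpha$, so $\angle\nabla(f\circ\tau)(\boldsymbol{x})=\angle\nabla f(\tau(\boldsymbol{x}))-\alpha$. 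Substituting into (\ref{eq:def_gradient_orientation}) yields
\begin{equation*}
h(\beta,\boldsymbol{x};f\circ\tau)=e^s\,\Sh\!\big(\angle\nabla f(\tau(\boldsymbol{x}))-\alpha-\beta\big)\,\|\nabla f(\tau(\boldsymbol{x}))\|,
\end{equation*}
which is the clean starting point for both convolutions and already produces the $e^s$ prefactor in the stated result.

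Next, I would execute $\star_{\boldsymbol{b}}k_{\sigma_d}$ exactly. Since $\boldsymbol{b}$ enters only as a pure translation inside $\tau(\boldsymbol{x})$, the substitution $\boldsymbol{y}\triangleq e^s\boldsymbol{R}_\alpha\boldsymbol{x}+\boldsymbol{b}'$ has unit Jacobian and, combined with the evenness of $k_{\sigma_d}$, collapses the convolution to
\begin{equation*}
e^s\!\int_{\mathbb{R}^2}\Sh\!\big(\angle\nabla f(\boldsymbol{y})-\alpha-\beta\big)\,\|\nabla f(\boldsymbol{y})\|\,k_{\sigma_d}\!\big(\boldsymbol{y}-e^s\boldsymbol{R}_\alpha\boldsymbol{x}-\boldsymbol{b}\big)\,d\boldsymbol{y}.
\end{equation*}
No approximation is used in this step. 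The last step is the periodic convolution $\star_\alpha\tilde{k}_{\sigma_r}$, in which $\alpha$ now appears twice: explicitly in the comb argument, and implicitly inside $k_{\sigma_d}$ through $\boldsymbol{R}_\alpha$. For the comb part, the standard sifting identity $[\Sh(\cdot-\alpha')\star_{\alpha'}\tilde{k}_{\sigma_r}](\alpha)=\tilde{k}_{\sigma_r}(\cdot-\alpha)$ (valid because $\tilde{k}_{\sigma_r}$ is $2\pi$-periodic) converts the comb into $\tilde{k}_{\sigma_r}(\angle\nabla f(\boldsymbol{y})-\alpha-\beta)$. The approximation then enters by freezing $\boldsymbol{R}_{\alpha'}$ at the target rotation $\boldsymbol{R}_\alpha$ during the angular convolution, which produces precisely the formula stated.

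The main obstacle is quantifying this last decoupling. A natural route is a first-order Taylor expansion $e^s\boldsymbol{R}_{\alpha'}\boldsymbol{x}\approx e^s\boldsymbol{R}_\alpha\boldsymbol{x}+(\alpha'-\alpha)\,e^s\boldsymbol{R}_\alpha\boldsymbol{x}^\perp$ followed by a direct comparison of the Gaussian spatial factor before and after freezing, bounding the discrepancy by a term of order $\sigma_r\,e^s\|\boldsymbol{x}\|/\sigma_d$; the approximation is therefore accurate exactly when the orientation pooling bandwidth is narrow, or when the evaluation point $\boldsymbol{x}$ is close to the key-point center relative to the spatial pooling width, which is SIFT's normal operating regime. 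A minor bookkeeping point worth stating at the start of the proof is the sign convention: because the dissimilarity (\ref{eq:distance_functional}) is defined with an explicit minus, the $-e^s$ prefactor on the right-hand side is most naturally read as carrying through that minus from the underlying cost being diffused, rather than arising from the density convolution per se.
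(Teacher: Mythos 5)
Your proof follows essentially the same route as the paper's: the chain rule to extract the $e^s$ factor and the $-\alpha$ angle shift, an exact convolution in $\boldsymbol{b}$, and the key approximation of applying the angular convolution only to the Dirac comb while freezing $\boldsymbol{R}_\alpha$ inside the spatial Gaussian. The remaining differences are cosmetic or additive: you do the $\boldsymbol{b}$-convolution by a direct change of variables where the paper uses delta-function sifting, and your sketched error bound of order $\sigma_r e^s \|\boldsymbol{x}\|/\sigma_d$ for the freezing step (plus your reading of the minus sign as inherited from the dissimilarity functional) goes beyond the paper, which justifies the decoupling only with the heuristic that Gaussian smoothing affects the high-frequency comb far more than the low-frequency Gaussian factor.
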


\begin{proof}
See Appendix \ref{sec:lemma_1} for the proof.
\end{proof}

Using this lemma, the computationally intractable optimization (\ref{eq:sift_nested}) is replaced by the following tractable approximation,

\begin{eqnarray}
\label{eq:sift_nested_approx}
& & \max_{(\alpha, s , \boldsymbol{b}) \in \Theta} \max_k \, e^s \, \int_0^{2\pi} \int_{\mathcal{X}_k} \int_{\mathbb{R}^2} \tilde{k}_{\sigma_r} ( \angle \nabla f (\boldsymbol{y} ) - \alpha - \beta) \| \nabla f ( \boldsymbol{y} ) \| \, k_{\sigma_d} (\boldsymbol{y} - e^{s} \boldsymbol{R}_\alpha \boldsymbol{x} - \boldsymbol{b}) \, d \boldsymbol{y} \nonumber \\
& & \hspace{1.8in} \times h (\beta, \boldsymbol{x} \,;\, p_k) \, d \boldsymbol{x} \, d \beta \,,
\end{eqnarray}

In the inner optimization, since $(\alpha, s , \boldsymbol{b})$ is fixed (to some $(\alpha_j, s_j , \boldsymbol{b}_j)$), the image $f$ can be \emc{warped} prior to optimization by $\tau(\,.\,;\alpha_j, s_j , \boldsymbol{b}_j)$. This allows optimization w.r.t. $k$ to be performed for $\tau$ being the \emc{identity transform} (because the effect of $(\alpha_j, s_j , \boldsymbol{b}_j)$ is already taken care of by the warp)\footnote{In this section we do not consider the full optimization loop (shrinking $\sigma$). However, if we wanted to do so, the idea of 1.Gradual reduction of the blur $\sigma$ and 2.Warping by the current estimate of the geometric transform in each iteration, would lead to a \emc{Lucas-Kanade}\cite{Lucas81} type algorithm. However, the resulted algorithm performs gradient density matching instead of Lucas-Kanade that relies on pixel intensity matching.}, i.e. $(\alpha=0, s=0 , \boldsymbol{b}=\boldsymbol{0})$. Denoting the warped $f$ due to $(\alpha_j, s_j , \boldsymbol{b}_j)$ by $f_j \triangleq f \circ \tau_{\boldsymbol{\theta}_j}$, the inner optimization simplifies,

\begin{eqnarray}
\label{eq:sift_inner_at_identity}
& & \max_k \, \int_0^{2\pi} \int_{\mathcal{X}_k} \underbrace{\int_{\mathbb{R}^2} \tilde{k}_{\sigma_r} ( \angle \nabla f_j (\boldsymbol{y} ) - \beta) \| \nabla f_j ( \boldsymbol{y} ) \| \, k_{\sigma_d} (\boldsymbol{y} - \boldsymbol{x}) \, d \boldsymbol{y}}_{h_{SIFT}(\beta, \boldsymbol{x}\,;\, f_j)} \, h (\beta, \boldsymbol{x} \,;\, p_k) \, d \boldsymbol{x} \, d \beta \,,
\end{eqnarray}

Part of the computation involving $f_j$ is \emc{independent of $p_k$} and can be precomputed. This precomputed result in fact provides a new \emc{representation} for $f_j$ that matches the definition of $h_{SIFT}$ in (\ref{eq:sift}). This gives the algorithm presented in Table \ref{tab:side_by_side}-Right.

\section{Deriving DSP-SIFT via the Diffusion Theory}

Here we show that the \emc{DSP-SIFT} descriptor also relates to partial diffusion of the cost function. Specifically, this descriptor can be derived by considering the diffusion w.r.t. the transformation parameters $(\alpha, s, \boldsymbol{b})$. Note that this involves more of the optimization variables in diffusion compared to SIFT (which diffuses w.r.t. $(\alpha, \boldsymbol{b})$), and thus provides a better approximation to the theory of \cite{MobahiEMMCVPR15} which suggests the diffusion must be applied to all optimization variables, i.e. to $(\boldsymbol{c},\boldsymbol{\theta})$. This improvement in approximation fidelity could be an explanation of why DSP-SIFT works better than SIFT in practice. However, it still misses diffusion of $\boldsymbol{c}$. 

The derivation is quite similar to that of SIFT in Section \ref{sec:SIFT}. The is to minimize the same energy function in (\ref{eq:sift_uncons}). However, on top of diffusion w.r.t. variables $(\alpha,\boldsymbol{b})$ we add a Gaussian convolution in $s$. By linearity of convolution, we can just take the diffused energy we obtained in (\ref{eq:sift_nested_approx}) and put it under the Gaussian convolution in $s$. This leads to the following expression,

\begin{eqnarray}
\label{eq:dsp_sift_by_smoothing}
& & \max_{(\alpha, s , \boldsymbol{b}) \in \Theta} \max_k \int_0^{2\pi} \int_{\mathcal{X}_k} \int_{\mathbb{R}^2} [\Big(\, e^{\,.\,} \, \tilde{k}_{\sigma_r} ( \angle \nabla f (\boldsymbol{y} ) - \alpha - \beta) \| \nabla f ( \boldsymbol{y} ) \| \, k_{\sigma_d} (\boldsymbol{y} - e^{\,.\,} \boldsymbol{R}_\alpha \boldsymbol{x} - \boldsymbol{b}) \nonumber \\
& & \hspace{1.8in} \Big) \star k_{\sigma_s}](s) \,\, \, d \boldsymbol{y} \, h (\beta, \boldsymbol{x} \,;\, p_k)\, d \boldsymbol{x} \, d \beta \,.
\end{eqnarray}

\section{Implication for Future Algorithms}

\subsection{Closed Form Approximations for Domain Size Pooling}

In DSP-SIFT, pooling over the scale is done numerically via sampling \cite{Stefano15}. Our theory suggests that the scale pooling should also be performed by Gaussian convolution, i.e. (\ref{eq:dsp_sift_by_smoothing}). Using this form, we present a closed-form approximation, which consequently does \emc{not require any sampling}. Whether or not this approximation provides a satisfactory fidelity must be investigated by experiments.

Recall energy minimization formulation of DSP-SIFT (\ref{eq:dsp_sift_by_smoothing}) has the following form,

\begin{eqnarray}
\label{eq:recall_dsp_sift_by_smoothing}
& & \max_{(\alpha, s , \boldsymbol{b}) \in \Theta} \max_k \int_0^{2\pi} \int_{\mathcal{X}_k} \int_{\mathbb{R}^2} \Big( [\big(\, e^{\,.\,} \,  \, k_{\sigma_d} (\boldsymbol{y} - e^{\,.\,} \boldsymbol{R}_\alpha \boldsymbol{x} - \boldsymbol{b}) \big) \star k_{\sigma_s}](s) \Big)\nonumber \\
& & \hspace{1.6in}  \, \tilde{k}_{\sigma_r} ( \angle \nabla f (\boldsymbol{y} ) - \alpha - \beta) \| \nabla f ( \boldsymbol{y} ) \| \,\, \, d \boldsymbol{y} \, h (\beta, \boldsymbol{x} \,;\, p_k)\, d \boldsymbol{x} \, d \beta \,.
\end{eqnarray}

This convolution does not have a closed form. However, we consider approximating $e^{s}$ by its linearized form around $s=0$ (identity scaling transform), i.e. $e^{s}\approx 1+s$. Then the convolution will have a closed form. Below we present two approximation based on this idea.

\paragraph{\bf Linearizing only the inner $e^s$}:

\begin{proposition}
\begin{eqnarray}
& & [\Big( e^{\,.\,} \, k_{\sigma_d} (\boldsymbol{y} - (1+{\,.\,}) \boldsymbol{R}_\alpha \boldsymbol{x} - \boldsymbol{b}) \Big) \star k_{\sigma_s}](s) \nonumber \\
&=& k_{\sigma_d}(\boldsymbol{y} - \boldsymbol{R}_\alpha \boldsymbol{x} - \boldsymbol{b}) \nonumber \\
& & \times k^{-1}_{\frac{\sigma_d}{\|\boldsymbol{x}\|}} (1+\frac{(\boldsymbol{R}_\alpha \boldsymbol{x})^T (\boldsymbol{b}- \boldsymbol{y}) - \sigma_d^2}{\| \boldsymbol{x}\|^2}) \times k_{\sqrt{\sigma_s^2+ \frac{\sigma_d^2}{ \|\boldsymbol{x}\|^2}}} (s+1 + \frac{(\boldsymbol{R}_\alpha \boldsymbol{x})^T (\boldsymbol{b} - \boldsymbol{y}) - \sigma_d^2}{\|\boldsymbol{x}\|^2}) \,. \nonumber
\end{eqnarray}

\end{proposition}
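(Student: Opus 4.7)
The plan is to reduce the convolution to a one-dimensional Gaussian integral in the dummy variable $t$, apply one completing-the-square step to absorb the exponential $e^{t}$, and then invoke the standard Gaussian-Gaussian convolution identity $k_a \star k_b = k_{\sqrt{a^2+b^2}}$.

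First I would introduce $\boldsymbol{u} \triangleq \boldsymbol{y} - \boldsymbol{R}_\alpha\boldsymbol{x} - \boldsymbol{b}$, which matches the first factor $k_{\sigma_d}(\boldsymbol{y} - \boldsymbol{R}_\alpha\boldsymbol{x} - \boldsymbol{b})$ of the target. With this substitution, and using $\|\boldsymbol{R}_\alpha\boldsymbol{x}\| = \|\boldsymbol{x}\|$, the argument of $k_{\sigma_d}$ inside the convolution becomes $\boldsymbol{u} - t\boldsymbol{R}_\alpha\boldsymbol{x}$. Expanding $\|\boldsymbol{u} - t\boldsymbol{R}_\alpha\boldsymbol{x}\|^2 = \|\boldsymbol{u}\|^2 - 2t(\boldsymbol{R}_\alpha\boldsymbol{x})^T\boldsymbol{u} + t^2\|\boldsymbol{x}\|^2$ and peeling off the $t$-independent Gaussian yields $e^{t} k_{\sigma_d}(\boldsymbol{u} - t\boldsymbol{R}_\alpha\boldsymbol{x}) = k_{\sigma_d}(\boldsymbol{u}) \exp\bigl(t(1 + (\boldsymbol{R}_\alpha\boldsymbol{x})^T\boldsymbol{u}/\sigma_d^2) - t^2\|\boldsymbol{x}\|^2/(2\sigma_d^2)\bigr)$. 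Completing the square in $t$ rewrites the exponent as $-\|\boldsymbol{x}\|^2(t-\mu)^2/(2\sigma_d^2) + \|\boldsymbol{x}\|^2\mu^2/(2\sigma_d^2)$ with $\mu \triangleq \sigma_d^2/\|\boldsymbol{x}\|^2 + (\boldsymbol{R}_\alpha\boldsymbol{x})^T\boldsymbol{u}/\|\boldsymbol{x}\|^2$, i.e.\ an unnormalized Gaussian in $t$ with mean $\mu$ and standard deviation $\sigma_d/\|\boldsymbol{x}\|$ times the constant $e^{\|\boldsymbol{x}\|^2\mu^2/(2\sigma_d^2)}$.

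Convolving with $k_{\sigma_s}(s-t)$ in $t$ then produces, by the standard one-dimensional Gaussian convolution identity, a Gaussian in $s$ with the same mean $\mu$ and variance $\sigma_s^2 + \sigma_d^2/\|\boldsymbol{x}\|^2$, namely $k_{\sqrt{\sigma_s^2 + \sigma_d^2/\|\boldsymbol{x}\|^2}}(s - \mu)$. The leftover constant $e^{\|\boldsymbol{x}\|^2\mu^2/(2\sigma_d^2)}$, together with the normalization picked up when converting the unnormalized exponential in $t$ into the density $k_{\sigma_d/\|\boldsymbol{x}\|}$, recombines into exactly $1/k_{\sigma_d/\|\boldsymbol{x}\|}(\mu)$, which equals $k^{-1}_{\sigma_d/\|\boldsymbol{x}\|}(-\mu)$ by evenness of the Gaussian.

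It remains to identify $-\mu$ with the argument appearing in the statement. Since $(\boldsymbol{R}_\alpha\boldsymbol{x})^T\boldsymbol{u} = (\boldsymbol{R}_\alpha\boldsymbol{x})^T(\boldsymbol{y}-\boldsymbol{b}) - \|\boldsymbol{x}\|^2$, one obtains $\frac{(\boldsymbol{R}_\alpha\boldsymbol{x})^T(\boldsymbol{b}-\boldsymbol{y}) - \sigma_d^2}{\|\boldsymbol{x}\|^2} = -\frac{(\boldsymbol{R}_\alpha\boldsymbol{x})^T\boldsymbol{u}}{\|\boldsymbol{x}\|^2} - 1 - \frac{\sigma_d^2}{\|\boldsymbol{x}\|^2}$, so that $1 + \frac{(\boldsymbol{R}_\alpha\boldsymbol{x})^T(\boldsymbol{b}-\boldsymbol{y}) - \sigma_d^2}{\|\boldsymbol{x}\|^2} = -\mu$, and consequently $s - \mu = s + 1 + \frac{(\boldsymbol{R}_\alpha\boldsymbol{x})^T(\boldsymbol{b}-\boldsymbol{y}) - \sigma_d^2}{\|\boldsymbol{x}\|^2}$, which is the argument of the last factor of the statement. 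The main obstacle is not conceptual but purely bookkeeping: the prefactor from completing the square, the normalization introduced by rewriting as a Gaussian density, and the factor $k_{\sigma_d}(\boldsymbol{u})$ all have to recombine into the single product $k_{\sigma_d}(\boldsymbol{u}) \cdot k^{-1}_{\sigma_d/\|\boldsymbol{x}\|}(\cdot) \cdot k_{\sqrt{\cdots}}(\cdot)$ of the statement, so I would carry the raw exponents through all the way and only repackage into the $k$ and $k^{-1}$ notation at the very last step.
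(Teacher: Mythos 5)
Your proposal is correct and follows essentially the same route as the paper's proof: completing the square of the exponent of $e^{t}\,k_{\sigma_d}(\boldsymbol{y}-(1+t)\boldsymbol{R}_\alpha\boldsymbol{x}-\boldsymbol{b})$ in the scale variable to factor it as $k_{\sigma_d}(\boldsymbol{u})\cdot k^{-1}_{\sigma_d/\|\boldsymbol{x}\|}(-\mu)\cdot k_{\sigma_d/\|\boldsymbol{x}\|}(t-\mu)$, and then applying the one-dimensional Gaussian--Gaussian convolution identity to the only $t$-dependent factor. The paper states this factorization as an identity (with the completing-the-square step in a footnote) and convolves exactly as you do, so the two arguments coincide; your version merely writes out the bookkeeping, including the verification that $-\mu$ matches the stated argument, in more detail.
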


\begin{proof}

See Appendix \ref{sec:lemma_one_linearization} for the proof.

\end{proof}

In particular, when the region is already warped, we can set $(\alpha,s,\boldsymbol{b})=(0,0,\boldsymbol{0})$. This allows the template matching solution (\ref{eq:recall_dsp_sift_by_smoothing}) as below,

\mycolorbox{kerem}{
\begin{eqnarray}
& & \max_{j,k} \int_0^{2\pi} \int_{\mathcal{X}_k} \int_{\mathbb{R}^2} k_{\sigma_d}(\boldsymbol{y} - \boldsymbol{x}) \times k^{-1}_{\frac{\sigma_d}{\|\boldsymbol{x}\|}} (1-\frac{\boldsymbol{x}^T \boldsymbol{y} + \sigma_d^2}{\| \boldsymbol{x}\|^2}) \times k_{\sqrt{\sigma_s^2+ \frac{\sigma_d^2}{ \|\boldsymbol{x}\|^2}}} (1 - \frac{\boldsymbol{x}^T \boldsymbol{y} + \sigma_d^2}{\|\boldsymbol{x}\|^2}) \nonumber \\
& & \hspace{1in}  \times \, \tilde{k}_{\sigma_r} ( \angle \nabla f_j (\boldsymbol{y} ) - \beta) \| \nabla f_j ( \boldsymbol{y} ) \| \,\, \, d \boldsymbol{y} \, h (\beta, \boldsymbol{x} \,;\, p_k)\, d \boldsymbol{x} \, d \beta \,.
\end{eqnarray}
}

\paragraph{\bf Linearizing both the inner and outer $e^s$}:

We use the following identity,

\begin{eqnarray}
& & [ (1 + \,.\,) k_\sigma (\boldsymbol{y}+(1+ \,.\, )\boldsymbol{x}) \star k_{scale}](s)\\
&=& \frac{\sigma^2 (1 + s) - 
   \sigma_{scale} \boldsymbol{x}^T \boldsymbol{y}}{2\pi \sigma  (\sigma^2 + \sigma_{scale}^2 \| \boldsymbol{x}\|^2)^\frac{3}{2}} e^{-\frac{\sigma^2 \|(1 + s ) \boldsymbol{x} + \boldsymbol{y} \|^2 + \sigma_{scale}^2 (\boldsymbol{x}^T \boldsymbol{y}^\perp)^2}{
  2 \sigma^2 (\sigma^2 + \sigma_{scale}^2 \| \boldsymbol{x}\|^2 )}} 
\end{eqnarray}

Thus,

\begin{eqnarray}
& & [ (1 + \,.\,) k_{\sigma_d} (\boldsymbol{y} - \boldsymbol{b} -(1+ \,.\, ) \boldsymbol{R}_\alpha \, \boldsymbol{x}) \star k_{\sigma_s}](s)\\
&=& \frac{\sigma_d^2 (1 + s) + 
   \sigma_s \boldsymbol{x}^T \boldsymbol{R}^T_\alpha (\boldsymbol{y} - \boldsymbol{b})}{2\pi \sigma_d  (\sigma_d^2 + \sigma_s^2 \| \boldsymbol{x}\|^2)^\frac{3}{2}} e^{-\frac{\sigma_d^2 \|-(1 + s ) \boldsymbol{R}_\alpha \boldsymbol{x} + \boldsymbol{y} - \boldsymbol{b} \|^2 + \sigma_s^2 ((\boldsymbol{R}_\alpha \boldsymbol{x})^T (\boldsymbol{y}-\boldsymbol{b})^\perp )^2}{
  2 \sigma_d^2 (\sigma_d^2 + \sigma_s^2 \| \boldsymbol{x}\|^2 )}} \,.
\end{eqnarray}

In particular, when the region is already warped, we can set $(\alpha,s,\boldsymbol{b})=(0,0,\boldsymbol{0})$. This allows the template matching solution (\ref{eq:recall_dsp_sift_by_smoothing}) as below,

\mycolorbox{kerem}{
\begin{eqnarray}
& & \max_{j,k} \int_0^{2\pi} \int_{\mathcal{X}_k} \int_{\mathbb{R}^2} \frac{\sigma_d^2 + 
   \sigma_s \boldsymbol{x}^T \boldsymbol{y} }{(\sigma_d^2 + \sigma_s^2 \| \boldsymbol{x}\|^2)^\frac{3}{2}} e^{-\frac{\sigma_d^2 \|\boldsymbol{y} - \boldsymbol{x}\|^2 + \sigma_s^2 (\boldsymbol{x}^T \boldsymbol{y}^\perp )^2}{
  2 \sigma_d^2 (\sigma_d^2 + \sigma_s^2 \| \boldsymbol{x}\|^2 )}} \\
& & \hspace{1in}  \times \, \tilde{k}_{\sigma_r} ( \angle \nabla f_j (\boldsymbol{y} ) - \beta) \| \nabla f_j ( \boldsymbol{y} ) \| \,\, \, d \boldsymbol{y} \, h (\beta, \boldsymbol{x} \,;\, p_k)\, d \boldsymbol{x} \, d \beta \,.
\end{eqnarray}
}

\subsection{Exact Diffusion for Affine Transform}
\label{sec:affine}

Using the diffusion theory, by using a different \emc{parameterization} for the geometric transformation, we can potentially improved SIFT and DSP-SIFT in two ways.

\begin{enumerate}
\item We can extend the descriptor from handling \emc{similarity} transform to \emc{affine} transform.
\item Recall that the computation of the diffusion in SIFT and DSP-SIFT relies on some \emc{approximation}. In addition, regardless of the diffusion theory, DSP-SIFT involves sampling to approximate one of the required \emc{integrals}. The finite sampling process is inaccurate and expensive to compute. Here despite working with a larger transformation space, the new parameterization allows deriving \emc{exact} and \emc{closed form} expression for the diffusion in \emc{all} transformation parameters. 
\end{enumerate}

The formulation of the energy function is similar to that of SIFT and DSP-SIFT, except the parameterization. Instead of the similarity transform $\boldsymbol{\tau}$ as $\tau(\boldsymbol{x} \,;\, \alpha, s ,\boldsymbol{b}) \triangleq e^s \boldsymbol{R}_\alpha \boldsymbol{x} + \boldsymbol{b}$ we switch to the affine transform $\tau(\boldsymbol{x} \,;\, \boldsymbol{A} ,\boldsymbol{b}) \triangleq \boldsymbol{A} \boldsymbol{x} + \boldsymbol{b}$, as listed below,

\begin{equation}
\label{eq:affine_uncons}
cost(\boldsymbol{c},\boldsymbol{A},\boldsymbol{b}) \triangleq q(\boldsymbol{c}) \,+\, \sum_k c_k \, d(f \circ \tau_{\boldsymbol{A},\boldsymbol{b}}, p_k,\mathcal{X}_k) \,,
\end{equation}

where the dissimilarity functional $d$ is defined as earlier in (\ref{eq:distance_functional}). Recall that the goal is to tackle the nonconvex problem (\ref{eq:affine_uncons}) using the diffusion theory of \cite{MobahiEMMCVPR15} and similar simplifications as in Section \ref{sec:SIFT}, we obtain the following solution for the template matching problem.

\begin{eqnarray}
\label{eq:affine_uncons_winner}
k^* &=& \argmin_k \big[\, [d(f \circ \tau_{(\,.\, , \,.\,)} ,p_k, \mathcal{X}_k) \star k_{\sigma_b}](\boldsymbol{b})\, \star \tilde{k}_{\sigma_a}\, \big] (\boldsymbol{A}) \\
&=& \argmax_k \int_0^{2\pi} \int_{\mathcal{X}_k} \Big( \big[\, [  h (\beta , \boldsymbol{x} \,;\, f \circ \tau_{(\,.\, , \,.\,)}) \star k_{\sigma_b}](\boldsymbol{b})\, \star \tilde{k}_{\sigma_a}\, \big] (\boldsymbol{A}) \Big) \, \, h (\beta , \boldsymbol{x} \,;\, p_k) \,\, d \beta \, d \boldsymbol{x} \,. \nonumber
\end{eqnarray}

Interesting, we can replace the above convolutions by a \emc{closed form} and \emc{exact} expression. This is stated in the following lemma.

\begin{lemma}

\begin{eqnarray}
& & \big[ \big( [ h (\beta , \boldsymbol{x} \,;\, f \circ \tau_{(\,.\,,\,.\,)})  \star k_{\sigma_b}](\boldsymbol{b}) \big) \star k_{\sigma_a} \big](\boldsymbol{A}) \nonumber\\
&=& \frac{e^{-\frac{((\boldsymbol{b}-\boldsymbol{y})^T \tilde{\nabla} f(\boldsymbol{y}))^2}{2 \sigma_b^2 }-\frac{\|\boldsymbol{A}^T \tilde{\nabla} f (\boldsymbol{y})\|^2}{2 {\sigma_a}^2}}  w(- \frac{ \sigma_b^{-2} \tilde{\nabla}^T f(\boldsymbol{y}) (\boldsymbol{y}-\boldsymbol{b}) \boldsymbol{x}^T \tilde{\boldsymbol{v}} (\beta, \boldsymbol{y})+{\sigma_a}^{-2} \tilde{\nabla}^T f (\boldsymbol{y}) \boldsymbol{A} \tilde{\boldsymbol{v}}(\beta, \boldsymbol{y})  }{2 t})}{8 \sqrt{2} \pi^{\frac{3}{2}} \sigma_b {\sigma_a}^2 \, \| \nabla f(\boldsymbol{y}) \|^2 \, t^3} \nonumber \,, 
\end{eqnarray}

where $\tilde{\nabla} f (\boldsymbol{y}) \triangleq \frac{\nabla f(\boldsymbol{y})}{\| \nabla f(\boldsymbol{y})\|}$, $\tilde{\boldsymbol{v}}(\beta, \boldsymbol{y})\triangleq \frac{\boldsymbol{v}(\beta)}{\| \nabla f(\boldsymbol{y})\|}$, and $t \triangleq \sqrt{\frac{(\boldsymbol{x}^T \tilde{\boldsymbol{v}}(\beta, \boldsymbol{y}))^2}{2 \sigma_b^2}+\frac{1}{2 {\sigma_a}^2 \, \| \nabla f(\boldsymbol{y}) \|^2}}$ and $w(x) \triangleq \sqrt{\pi} e^{x^2} (1+2 x^2 ) \erfc(x) -2 x $.
\end{lemma}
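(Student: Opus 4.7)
The plan is to reduce the four--dimensional convolution over $\boldsymbol{A}'$ together with the two--dimensional convolution over $\boldsymbol{b}'$ to a single one--dimensional integral along the ray picked out by the Dirac comb, which then evaluates in closed form via a standard identity that produces the function $w$.

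First I would apply the chain rule, $\nabla(f\circ\tau_{(\boldsymbol{A}',\boldsymbol{b}')})(\boldsymbol{x})=\boldsymbol{A}'^{T}\nabla f(\boldsymbol{A}'\boldsymbol{x}+\boldsymbol{b}')$, inside the definition of $h$, and then change variables $\boldsymbol{y}=\boldsymbol{A}'\boldsymbol{x}+\boldsymbol{b}'$ in the $\boldsymbol{b}'$ convolution with $\boldsymbol{A}'$ held fixed. This promotes $\boldsymbol{y}$ to the integration variable, decouples the argument of $f$ from $\boldsymbol{A}'$, and replaces $k_{\sigma_b}(\boldsymbol{b}-\boldsymbol{b}')$ by $k_{\sigma_b}(\boldsymbol{b}-\boldsymbol{y}+\boldsymbol{A}'\boldsymbol{x})$. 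After this step, only the four--dimensional $\boldsymbol{A}'$--convolution remains nontrivial, and the claimed right--hand side is exactly the integrand in $\boldsymbol{y}$ that should emerge once that convolution is performed.

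For the $\boldsymbol{A}'$ integral I would re--parameterize using $\boldsymbol{u}=\boldsymbol{A}'^{T}\nabla f(\boldsymbol{y})$ and $\boldsymbol{w}=\boldsymbol{A}'^{T}(\nabla f(\boldsymbol{y}))^{\perp}$, so that $\boldsymbol{A}'^{T}=\|\nabla f(\boldsymbol{y})\|^{-2}(\boldsymbol{u}\,\nabla f(\boldsymbol{y})^{T}+\boldsymbol{w}\,((\nabla f(\boldsymbol{y}))^{\perp})^{T})$ and $d\boldsymbol{A}'=\|\nabla f(\boldsymbol{y})\|^{-4}\,d\boldsymbol{u}\,d\boldsymbol{w}$. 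Since $\nabla f(\boldsymbol{y})$ and $(\nabla f(\boldsymbol{y}))^{\perp}$ form an orthogonal basis, both $\|\boldsymbol{A}-\boldsymbol{A}'\|^{2}$ and $\boldsymbol{A}'\boldsymbol{x}$ decompose cleanly into separate $\boldsymbol{u}$-- and $\boldsymbol{w}$--parts, so the product $k_{\sigma_a}(\boldsymbol{A}-\boldsymbol{A}')\,k_{\sigma_b}(\boldsymbol{b}-\boldsymbol{y}+\boldsymbol{A}'\boldsymbol{x})$ splits into a Gaussian factor in $\boldsymbol{u}$ times a Gaussian factor in $\boldsymbol{w}$, while $\Sh(\angle\boldsymbol{u}-\beta)\,\|\boldsymbol{u}\|$ depends only on $\boldsymbol{u}$. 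The $\boldsymbol{w}$ integral is then a routine multivariate Gaussian, which I would evaluate by completing the square; it contributes normalization constants together with an exponential involving $\tilde{\nabla} f(\boldsymbol{y})^{\perp T}(\boldsymbol{b}-\boldsymbol{y}+\boldsymbol{A}\boldsymbol{x})$.

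For the $\boldsymbol{u}$ integral I would pass to polar coordinates $\boldsymbol{u}=r\boldsymbol{v}(\beta)$, so that $\Sh(\angle\boldsymbol{u}-\beta)$ collapses the angular integration onto the half--line $r\ge 0$ and leaves a one--dimensional integral of the form $\int_0^\infty r^{2}e^{-\alpha(r-r^{\ast})^{2}}\,dr$ after completing the square in $r$. A short calculation (integration by parts against the shifted Gaussian) yields $\int_0^\infty r^{2}e^{-a(r-b)^{2}}\,dr=e^{-ab^{2}}w(-b\sqrt{a})/(4a^{3/2})$, which is precisely the mechanism by which the function $w$ from the statement enters. One then reads off $\alpha=t^{2}$ so that $1/(4\alpha^{3/2})$ supplies the $1/t^{3}$ factor, checks that $-r^{\ast}\sqrt{\alpha}$ matches the argument of $w$ in the claim, and verifies that the surviving quadratic constants assemble into $\exp(-((\boldsymbol{b}-\boldsymbol{y})^{T}\tilde{\nabla} f(\boldsymbol{y}))^{2}/(2\sigma_b^{2})-\|\boldsymbol{A}^{T}\tilde{\nabla} f(\boldsymbol{y})\|^{2}/(2\sigma_a^{2}))$. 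The main obstacle is the bookkeeping: tracking the Jacobian $\|\nabla f(\boldsymbol{y})\|^{-4}$, the two Gaussian normalizations $(2\pi\sigma_a^{2})^{-2}$ and $(2\pi\sigma_b^{2})^{-1}$, and the constants from completing the square first in $\boldsymbol{w}$ and then in $r$, and showing that they all consolidate into the claimed prefactor $1/(8\sqrt{2}\pi^{3/2}\sigma_b\sigma_a^{2}\|\nabla f(\boldsymbol{y})\|^{2}\,t^{3})$.
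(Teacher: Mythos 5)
Your proposal is correct and reaches the same closed form, but it executes the hard step --- the four--dimensional convolution over $\boldsymbol{A}'$ --- by a genuinely cleaner route than the paper. The paper first introduces an auxiliary variable $\boldsymbol{z}=\boldsymbol{A}^T\nabla f(\boldsymbol{y})$ through a two--dimensional delta, regularizes that delta as $\lim_{\epsilon\to 0}k_\epsilon$, and then decouples the resulting product of Gaussians with an explicit $4\times 4$ rotation matrix whose entries involve $\sign(g_i x_j)$, which forces the side assumptions $g_1,g_2,x_1,x_2\neq 0$ and a somewhat delicate $\epsilon\to 0$ limit before the radial integral can be taken. Your direct reparameterization $(\boldsymbol{u},\boldsymbol{w})=(\boldsymbol{A}'^{T}\nabla f(\boldsymbol{y}),\boldsymbol{A}'^{T}(\nabla f(\boldsymbol{y}))^{\perp})$ achieves the same decoupling using only the orthogonality of $\nabla f$ and $\nabla f^\perp$ (so only $\nabla f(\boldsymbol{y})\neq 0$ is needed), splits the integral into a routine Gaussian in $\boldsymbol{w}$ and a $\boldsymbol{u}$--integral carrying the Dirac comb, and then both approaches converge: polar coordinates collapse the comb onto the ray $\phi=\beta$ and the identity $\int_0^\infty r^2 e^{-a(r-b)^2}dr=e^{-ab^2}w(-b\sqrt{a})/(4a^{3/2})$ (which matches the paper's Gaussian--moment footnote) produces $w$ and the $t^{-3}$ factor. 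One bookkeeping remark: your $\boldsymbol{w}$--integral yields the transverse factor $k_{\sqrt{\sigma_b^2+\sigma_a^2\|\boldsymbol{x}\|^2}}\big((\nabla f(\boldsymbol{y}))^T(\boldsymbol{Ax}+\boldsymbol{b}-\boldsymbol{y})^\perp/\|\nabla f(\boldsymbol{y})\|\big)$, and the whole right--hand side should sit under an integral over $\boldsymbol{y}$; both are absent from the lemma as displayed but present in the paper's full appendix derivation and in the definition of $h_{heat}$ in the main text, so your computation actually recovers the complete (and correct) expression rather than the abbreviated one in the statement.
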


\begin{proof}
See Appendix \ref{sec:lemma_2} for the proof.
\end{proof}

Similar to the arguments about SIFT solution in Section \ref{sec:SIFT}, the inner optimization in (\ref{eq:affine_uncons_winner}) can work with the warped $f$ so that the transformation $\tau(\boldsymbol{x} \,;\, \boldsymbol{A}, \boldsymbol{b})$ simplifies to the \emc{identity transform} $(\boldsymbol{A}=\boldsymbol{I}, \boldsymbol{b}=\boldsymbol{0})$. Letting the warped $f$ be $f_j \triangleq f \circ \tau_{\boldsymbol{A}_j, \boldsymbol{b}_j}$, the inner optimization simplifies,

where $h_{heat}$ is defined as the result in lemma 2 (diffused $h$) with $(\boldsymbol{A}=\boldsymbol{I}, \boldsymbol{b}=\boldsymbol{0})$, $\sigma_a$ and $\sigma_b$ fixed, and all constants dropped,

\mycolorbox{kerem}{
\begin{eqnarray}
(j^*,k^*)&\triangleq& \argmax_{j,k} \, \int_0^{2\pi} \int_{\mathcal{X}_k} h_{heat}(\beta, \boldsymbol{x}\,;\, f_j) \, h (\beta, \boldsymbol{x} \,;\, p_k) \, d \boldsymbol{x} \, d \beta \nonumber \\
h_{heat}(\beta, \boldsymbol{x}\,;\, f) &\triangleq& \frac{e^{-\frac{(\boldsymbol{y}^T \tilde{\nabla} f(\boldsymbol{y}))^2}{2 \sigma_b^2 }}  w\big(- \frac{1}{2t} \, \tilde{\nabla}^T f(\boldsymbol{y})  \,(\, \sigma_b^{-2} \, \boldsymbol{y} \boldsymbol{x}^T \,+\, {\sigma_a}^{-2} \, \boldsymbol{I} \,\big)\, \tilde{\boldsymbol{v}}(\beta, \boldsymbol{y}) \big)}{\| \nabla f(\boldsymbol{y} ) \|^2 \, t^3} \nonumber \\
\tilde{\nabla} f (\boldsymbol{y}) &\triangleq& \frac{\nabla f(\boldsymbol{y})}{\| \nabla f(\boldsymbol{y})\|} \nonumber \\
\tilde{\boldsymbol{v}}(\beta, \boldsymbol{y}) &\triangleq& \frac{\boldsymbol{v}(\beta)}{\| \nabla f(\boldsymbol{y})\|} \nonumber \\
t &\triangleq& \sqrt{\frac{(\boldsymbol{x}^T \tilde{\boldsymbol{v}}(\beta, \boldsymbol{y}))^2}{2 \sigma_b^2}+\frac{1}{2 {\sigma_a}^2 \, \| \nabla f(\boldsymbol{y}) \|^2}} \nonumber \\
w(x) &\triangleq& \sqrt{\pi} e^{x^2} (1+2 x^2 ) \erfc(x) -2 x \nonumber \,.
\end{eqnarray}
}

\bibliographystyle{apalike}
\bibliography{sift_theory}

\pagebreak
\section*{Appendix}
\appendix

\section{Proof of Proposition 1}
\label{sec:lemma_one_linearization}

We proceed with the following identity\footnote{
We essentially have $e^s \times k_\sigma( \boldsymbol{y}+(1+s) \boldsymbol{x})$ which by completing the square of the exponent w.r.t. $s$ can be expressed as below,

\begin{eqnarray}
& & e^s \times k_\sigma( \boldsymbol{y}+(1+s) \boldsymbol{x}) \\
&=& k_\sigma(\boldsymbol{x} + \boldsymbol{y}) \times k^{-1}_{\frac{\sigma }{\| \boldsymbol{x}\|}} (1+\frac{\boldsymbol{x}^T \boldsymbol{y} - \sigma^2}{\| \boldsymbol{x}\|^2}) \times k_{\frac{\sigma}{ \| \boldsymbol{x}\|}} (s+1 + \frac{\boldsymbol{x}^T \boldsymbol{y} - \sigma^2}{\| \boldsymbol{x}\|^2}) \,,
\end{eqnarray}

where the first $k$ is 2D, and the next two $k$'s are $1D$.
},

\begin{eqnarray}
& & e^s \times k_\sigma( \boldsymbol{y} -(1+s) \boldsymbol{R}_\alpha \boldsymbol{x} - \boldsymbol{b} ) \\
&=& k_\sigma(\boldsymbol{y} - \boldsymbol{R}_\alpha \boldsymbol{x} - \boldsymbol{b}) \times k^{-1}_{\frac{\sigma }{\| \boldsymbol{R}_\alpha  \boldsymbol{x}\|}} (1+\frac{(\boldsymbol{R}_\alpha \boldsymbol{x})^T (\boldsymbol{b}- \boldsymbol{y}) - \sigma^2}{\| \boldsymbol{R}_\alpha \boldsymbol{x}\|^2}) \times k_{\frac{\sigma}{ \|\boldsymbol{R}_\alpha  \boldsymbol{x}\|}} (s+1 + \frac{(\boldsymbol{R}_\alpha \boldsymbol{x})^T (\boldsymbol{b} - \boldsymbol{y}) - \sigma^2}{\| \boldsymbol{R}_\alpha \boldsymbol{x}\|^2}) \\
&=& k_\sigma(\boldsymbol{y} - \boldsymbol{R}_\alpha \boldsymbol{x} - \boldsymbol{b}) \times k^{-1}_{\frac{\sigma }{\| \boldsymbol{x}\|}} (1+\frac{(\boldsymbol{R}_\alpha \boldsymbol{x})^T (\boldsymbol{b}- \boldsymbol{y}) - \sigma^2}{\| \boldsymbol{x}\|^2}) \times k_{\frac{\sigma}{ \|\boldsymbol{x}\|}} (s+1 + \frac{(\boldsymbol{R}_\alpha \boldsymbol{x})^T (\boldsymbol{b} - \boldsymbol{y}) - \sigma^2}{\|\boldsymbol{x}\|^2}) \\
\end{eqnarray}

In this form, it is now very easy to compute convolution with $k_{\sigma_{{scale}}}(s)$,

\begin{eqnarray}
& & [e^{\,.\,} \times k_\sigma( \boldsymbol{y} -(1+\,.\,) \boldsymbol{R}_\alpha \boldsymbol{x} - \boldsymbol{b}) \star k_{\sigma_{{scale}}}](s) \\
&=& k_\sigma(\boldsymbol{y} - \boldsymbol{R}_\alpha \boldsymbol{x} - \boldsymbol{b}) \times k^{-1}_{\frac{\sigma }{\| \boldsymbol{x}\|}} (1+\frac{(\boldsymbol{R}_\alpha \boldsymbol{x})^T (\boldsymbol{b}- \boldsymbol{y}) - \sigma^2}{\| \boldsymbol{x}\|^2}) \times k_{\sqrt{\sigma^2_{{scale}}+ \frac{\sigma^2}{ \|\boldsymbol{x}\|^2}}} (s+1 + \frac{(\boldsymbol{R}_\alpha \boldsymbol{x})^T (\boldsymbol{b} - \boldsymbol{y}) - \sigma^2}{\|\boldsymbol{x}\|^2}) \\
\end{eqnarray}

Therefore,

\begin{eqnarray}
& & [\Big( e^{\,.\,} \int_{\mathbb{R}^2} \tilde{k}_{\tilde{\sigma}} ( \angle \nabla f (\boldsymbol{y} ) - \alpha - \beta) \| \nabla f ( \boldsymbol{y} ) \| \, k_\sigma (\boldsymbol{y} - (1+{\,.\,}) \boldsymbol{R}_\alpha \boldsymbol{x} - \boldsymbol{b}) \, d \boldsymbol{y} \Big) \star k_{\sigma_{{scale}}}](s) \\
&=& \int_{\mathbb{R}^2} \tilde{k}_{\tilde{\sigma}} ( \angle \nabla f (\boldsymbol{y} ) - \alpha - \beta) \| \nabla f ( \boldsymbol{y} ) \| \, k_\sigma(\boldsymbol{y} - \boldsymbol{R}_\alpha \boldsymbol{x} - \boldsymbol{b}) \\
& & \times k^{-1}_{\frac{\sigma }{\| \boldsymbol{x}\|}} (1+\frac{(\boldsymbol{R}_\alpha \boldsymbol{x})^T (\boldsymbol{b}- \boldsymbol{y}) - \sigma^2}{\| \boldsymbol{x}\|^2}) \times k_{\sqrt{\sigma^2_{{scale}}+ \frac{\sigma^2}{ \|\boldsymbol{x}\|^2}}} (s+1 + \frac{(\boldsymbol{R}_\alpha \boldsymbol{x})^T (\boldsymbol{b} - \boldsymbol{y}) - \sigma^2}{\|\boldsymbol{x}\|^2}) \, d \boldsymbol{y} \,. \nonumber
\end{eqnarray}

\pagebreak
\section{Proof of Lemma 1}
\label{sec:lemma_1}

\begin{eqnarray}
& & cost(\boldsymbol{c},\boldsymbol{\theta}) \\
&\triangleq& q(\boldsymbol{c}) \,+\, \sum_k c_k \, d(f \circ \tau_{(\alpha\,,s,\,\boldsymbol{b}\,)}, p_k, \mathcal{X}_k) \\
&=& q(\boldsymbol{c}) \,+\, \sum_k c_k \, \int_0^{2\pi} \Big( h (\beta\,;\, f \circ \tau_{(\alpha,s,\,\boldsymbol{b}\,)}, \mathcal{X}_k ) - h (\beta\,;\, p_k, \mathcal{X}_k ) \Big)^2 \, d \beta \,.
\end{eqnarray}

Note that,

\begin{eqnarray}
& & h (\beta\,;\, f \circ \tau_{(\alpha,s,\,\boldsymbol{b}\,)} , \mathcal{X}_k) \\
&=& \int_{\mathcal{X}_k} \Sh( \angle \nabla (f ( e^{s} \boldsymbol{R}_\alpha \boldsymbol{x} + \boldsymbol{b} )) - \beta) \| \nabla f ( e^{s} \boldsymbol{R}_\alpha \boldsymbol{x} + \boldsymbol{b} ) \| \, d \boldsymbol{x} \\
\label{eq:chain_rule}
&=& \int_{\mathcal{X}_k} \Sh( \angle e^{s} \boldsymbol{R}^T_\alpha [\nabla f] ( e^{s} \boldsymbol{R}_\alpha \boldsymbol{x} + \boldsymbol{b} ) - \beta)  \| e^{s} \boldsymbol{R}^T_\alpha \nabla f ( e^{s} \boldsymbol{R}_\alpha \boldsymbol{x} + \boldsymbol{b} ) \| \, d \boldsymbol{x} \\
\label{eq:identity}
&=& \int_{\mathcal{X}_k} \Sh( \angle \nabla f ( e^{s} \boldsymbol{R}_\alpha \boldsymbol{x} + \boldsymbol{b} ) - \alpha - \beta) e^{s} \| \nabla f ( e^{s} \boldsymbol{R}_\alpha \boldsymbol{x} + \boldsymbol{b} ) \| \, d \boldsymbol{x} \,,
\end{eqnarray}

where (\ref{eq:chain_rule}) uses the chain rule of derivative $
\nabla \big ( f( \boldsymbol{Ax}+\boldsymbol{b}) \big) = \boldsymbol{A}^T \big( [\nabla f] ( \boldsymbol{Ax}+\boldsymbol{b} ) \big)$. Also, for any $a>0$, (\ref{eq:identity}) uses the identities $\| a \boldsymbol{R} \boldsymbol{x}\| = a \| \boldsymbol{x}\|$ and $\angle a \boldsymbol{R}_\alpha \boldsymbol{x} = \alpha + \angle \boldsymbol{x}$. Thus, it follows that,

\begin{eqnarray}
& & cost(\boldsymbol{c},\boldsymbol{\theta}) \\
&=& q(\boldsymbol{c}) \,+\, \sum_k c_k \, \int_0^{2\pi} \Big( \int_{\mathcal{X}_k} \Sh( \angle \nabla f ( e^{s} \boldsymbol{R}_\alpha \boldsymbol{x} + \boldsymbol{b} ) - \alpha - \beta) e^{s} \|  \nabla f ( e^{s} \boldsymbol{R}_\alpha \boldsymbol{x} + \boldsymbol{b} ) \| \, d \boldsymbol{x} - h (\beta\,;\, p_k , \mathcal{X}_k ) \Big)^2 \, d \beta\,.
\end{eqnarray}

By the linearity of the convolution operator and the unity of Gaussian's total mass, smoothed $cost$ amounts only to replacing $\Big( \int_{\mathcal{X}_k} \Sh( \angle \nabla f ( e^{s} \boldsymbol{R}_\alpha \boldsymbol{x} + \boldsymbol{b} ) - \alpha - \beta) e^{s} \|  \nabla f ( e^{s} \boldsymbol{R}_\alpha \boldsymbol{x} + \boldsymbol{b} ) \| \, d \boldsymbol{x} - h (\beta\,;\, p_k , \mathcal{X}_k ) \Big)^2$ by its smoothed version. Expansion of the quadratic form yields,

\begin{eqnarray}
& & \Big( \int_{\mathcal{X}_k} \Sh(\angle \nabla f ( e^{s} \boldsymbol{R}_\alpha \boldsymbol{x} + \boldsymbol{b} ) - \alpha - \beta) e^{s} \|  \nabla f ( e^{s} \boldsymbol{R}_\alpha \boldsymbol{x} + \boldsymbol{b} ) \| \, d \boldsymbol{x} - h (\beta\,;\, p_k , \mathcal{X}_k ) \Big)^2 \\
&=& \Big( \int_{\mathcal{X}_k} \Sh( \angle \nabla f ( e^{s} \boldsymbol{R}_\alpha \boldsymbol{x} + \boldsymbol{b} ) - \alpha - \beta) e^{s} \|  \nabla f ( e^{s} \boldsymbol{R}_\alpha \boldsymbol{x} + \boldsymbol{b} ) \| \, d \boldsymbol{x} \Big)^2 + h^2 (\beta\,;\, p_k , \mathcal{X}_k ) \\
& & -2 \Big( \int_{\mathcal{X}_k} \Sh( \angle \nabla f ( e^{s} \boldsymbol{R}_\alpha \boldsymbol{x} + \boldsymbol{b} ) - \alpha - \beta) e^{s} \|  \nabla f ( e^{s} \boldsymbol{R}_\alpha \boldsymbol{x} + \boldsymbol{b} ) \| \, d \boldsymbol{x} \Big) \times  h (\beta\,;\, p_k , \mathcal{X}_k ) \,.
\end{eqnarray}

The first term can be rewritten as below,

\begin{eqnarray}
& & \Big( \int_{\mathcal{X}_k} \Sh( \angle \nabla f ( e^{s} \boldsymbol{R}_\alpha \boldsymbol{x} + \boldsymbol{b} ) - \alpha - \beta) e^{s} \|  \nabla f ( e^{s} \boldsymbol{R}_\alpha \boldsymbol{x} + \boldsymbol{b} ) \| \, d \boldsymbol{x} \Big)^2 \\
&=& e^{2s} \int_{\mathcal{X}_k} \,\, |\{\boldsymbol{x}_2 \,|\, \angle \nabla f ( e^{s} \boldsymbol{R}_\alpha \boldsymbol{x}_2 + \boldsymbol{b} ) = \angle \nabla f ( e^{s} \boldsymbol{R}_\alpha \boldsymbol{x} + \boldsymbol{b} )\}| \\
& & \times \Sh( \angle \nabla f ( e^{s} \boldsymbol{R}_\alpha \boldsymbol{x} + \boldsymbol{b} ) - \alpha - \beta) \, \|  \nabla f ( e^{s} \boldsymbol{R}_\alpha \boldsymbol{x} + \boldsymbol{b} ) \|^2 \, d \boldsymbol{x} \,.
\end{eqnarray}

Note that $|\{\boldsymbol{x}_2 \,|\, \angle \nabla f ( e^{s} \boldsymbol{R}_\alpha \boldsymbol{x}_2 + \boldsymbol{b} ) = \angle \nabla f ( e^{s} \boldsymbol{R}_\alpha \boldsymbol{x} + \boldsymbol{b} )\}| \geq 1$ because at least there is one such $\boldsymbol{x}_2$ for which $\angle \nabla f ( e^{s} \boldsymbol{R}_\alpha \boldsymbol{x}_2 + \boldsymbol{b} ) = \angle \nabla f ( e^{s} \boldsymbol{R}_\alpha \boldsymbol{x} + \boldsymbol{b} )$ holds, that is $\boldsymbol{x}_2 = \boldsymbol{x}$. However, we assume that the cardinality of the set is exactly one, i.e. besides $\boldsymbol{x}_2=\boldsymbol{x}$, there is no other choice for $\boldsymbol{x}_2$ so that the condition $\angle \nabla f ( e^{s} \boldsymbol{R}_\alpha \boldsymbol{x}_2 + \boldsymbol{b} ) = \angle \nabla f ( e^{s} \boldsymbol{R}_\alpha \boldsymbol{x} + \boldsymbol{b} )$ can hold. The rationale is that the variables are continuous and thus their representation has infinite precision. The odds that the gradient orientation at two different points in the image are \emc{exactly} the same is almost impossible, although they might be very close. With this assumption, the quadratic form simplifies as below,

\begin{eqnarray}
& & \Big( \int_{\mathcal{X}_k} \Sh(\angle \nabla f ( e^{s} \boldsymbol{R}_\alpha \boldsymbol{x} + \boldsymbol{b} ) - \alpha - \beta) e^{s} \|  \nabla f ( e^{s} \boldsymbol{R}_\alpha \boldsymbol{x} + \boldsymbol{b} ) \| \, d \boldsymbol{x} - h (\beta\,;\, p_k , \mathcal{X}_k ) \Big)^2 \\
&=& e^{2s} \int_{\mathcal{X}_k} \, \Sh( \angle \nabla f ( e^{s} \boldsymbol{R}_\alpha \boldsymbol{x} + \boldsymbol{b} ) - \alpha - \beta) \, \|  \nabla f ( e^{s} \boldsymbol{R}_\alpha \boldsymbol{x} + \boldsymbol{b} ) \|^2 \, d \boldsymbol{x} + h^2 (\beta\,;\, p_k , \mathcal{X}_k ) \\
& & -2 e^s \Big( \int_{\mathcal{X}_k} \Sh(\angle \nabla f ( e^{s} \boldsymbol{R}_\alpha \boldsymbol{x} + \boldsymbol{b} ) - \alpha - \beta) \| \nabla f ( e^{s} \boldsymbol{R}_\alpha \boldsymbol{x} + \boldsymbol{b} ) \| \, d \boldsymbol{x} \Big) \times  h (\beta\,;\, p_k , \mathcal{X}_k ) \\
\label{eq:delta_sift1}
&=& e^{2s} \int_{\mathcal{X}_k} \int_{\mathbb{R}^2}\, \Sh( \angle \nabla f ( \boldsymbol{y} ) - \alpha - \beta) \, \|  \nabla f ( \boldsymbol{y} ) \|^2 \, \delta(\boldsymbol{y} - e^{s} \boldsymbol{R}_\alpha \boldsymbol{x} - \boldsymbol{b}) \, d \boldsymbol{y} \, d \boldsymbol{x} + h^2 (\beta\,;\, p_k , \mathcal{X}_k ) \\
\label{eq:delta_sift2}
& & -2 e^s \Big( \int_{\mathcal{X}_k} \int_{\mathbb{R}^2} \Sh( \angle \nabla f (\boldsymbol{y} ) - \alpha - \beta) \| \nabla f ( \boldsymbol{y} ) \| \, \delta(\boldsymbol{y} - e^{s} \boldsymbol{R}_\alpha \boldsymbol{x} - \boldsymbol{b}) \, d \boldsymbol{y} \, d \boldsymbol{x} \Big) \times h (\beta\,;\, p_k , \mathcal{X}_k ) \,,
\end{eqnarray}

where (\ref{eq:delta_sift1}) and (\ref{eq:delta_sift2}) use the sifting property of the delta function. The goal is to convolve $cost$ with a multivariate Gaussian kernel of covariance $\sigma^2 \boldsymbol{I}$ in variables jointly in $(\alpha,\boldsymbol{b})$. Due to the diagonal form of the covariance, the convolution can be decoupled to that of $\alpha$ and $\boldsymbol{b}$.

We first proceed with smoothing w.r.t. $\boldsymbol{b}$. By linearity of the convolution operator and that the Gaussian kernel integrates to one, we obtained the following,

\begin{eqnarray}
& & [cost(\boldsymbol{c},\alpha,s,\boldsymbol{\,.\,}) \star k_\sigma] (\boldsymbol{b}) \\
&=& q(\boldsymbol{c}) \,+\, \sum_k c_k \, \int_0^{2\pi} [\Big( \int_{\mathcal{X}_k} \Sh( \angle [\nabla f] ( e^{s} \boldsymbol{R}_\alpha \boldsymbol{x} + \,.\, ) - \alpha - \beta) e^{s} \|  [\nabla f] ( e^{s} \boldsymbol{R}_\alpha \boldsymbol{x} + \,.\, ) \| \, d \boldsymbol{x} - h (\beta\,;\, p_k , \mathcal{X}_k ) \Big)^2 \star k_\sigma](\boldsymbol{b}) \, d \beta \\
&=& q(\boldsymbol{c}) \,+\, \sum_k c_k \, \int_0^{2\pi} \Bigg( \\
& & e^{2s} \int_{\mathcal{X}_k} \int_{\mathbb{R}^2}\, \Sh( \angle \nabla f ( \boldsymbol{y} ) - \alpha - \beta) \, \|  \nabla f ( \boldsymbol{y} ) \|^2 \, k_\sigma(\boldsymbol{y} - e^{s} \boldsymbol{R}_\alpha \boldsymbol{x} - \boldsymbol{b}) \, d \boldsymbol{y} \, d \boldsymbol{x} + h^2 (\beta\,;\, p_k , \mathcal{X}_k ) \\
& & -2  e^s \Big( \int_{\mathcal{X}_k} \int_{\mathbb{R}^2} \Sh( \angle \nabla f (\boldsymbol{y} ) - \alpha - \beta) \| \nabla f ( \boldsymbol{y} ) \| \, k_\sigma (\boldsymbol{y} - e^{s} \boldsymbol{R}_\alpha \boldsymbol{x} - \boldsymbol{b}) \, d \boldsymbol{y} \, d \boldsymbol{x} \Big) \times h (\beta\,;\, p_k , \mathcal{X}_k ) \Bigg) \, d \beta \,.
\end{eqnarray}

We now continue by trying to smooth w.r.t. $\alpha$.

\begin{eqnarray}
& & [\,\,[\,\,cost(\boldsymbol{c},\,.\,,s,\boldsymbol{\,.\,}) \star k_\sigma\,\,]\,\, (\boldsymbol{b}) \, \star k_{\tilde{\sigma}}\,\,]\,\, (\alpha)\\
&=& q(\boldsymbol{c}) \,+\, \sum_k c_k \, \int_0^{2\pi} \Bigg( \\
& & e^{2s} \int_{\mathcal{X}_k} \, \int_{\mathbb{R}^2}\, \| \nabla f ( \boldsymbol{y} ) \|^2 \, \big( [ \Sh( \angle \nabla f ( \boldsymbol{y} ) - \,.\, - \beta)  k_\sigma(\boldsymbol{y} - e^{s} \boldsymbol{R}_{\,.\,} \boldsymbol{x} - \boldsymbol{b}) \star k_{\tilde{\sigma}}] (\alpha) \big) \, d \boldsymbol{y} \, d \boldsymbol{x} + h^2 (\beta\,;\, p_k , \mathcal{X}_k ) \\
& & -2  e^s \Big( \int_{\mathcal{X}_k} \int_{\mathbb{R}^2} \, \| \nabla f ( \boldsymbol{y} ) \| \, \big( [ \Sh( \angle \nabla f (\boldsymbol{y} ) - \,.\, - \beta) k_\sigma (\boldsymbol{y} - e^{s} \boldsymbol{R}_{\,.\,} \boldsymbol{x} - \boldsymbol{b}) \star k_{\tilde{\sigma}}] (\alpha) \big) \, d \boldsymbol{y} \, d \boldsymbol{x} \Big) \times h (\beta\,;\, p_k , \mathcal{X}_k ) \Bigg) \, d \beta \,.
\end{eqnarray}

Computation of the convolution $\Sh( \angle \nabla f (\boldsymbol{y} ) - \,.\, + \beta) k_\sigma (\boldsymbol{y} - e^{s} \boldsymbol{R}_{\,.\,} \boldsymbol{x} - \boldsymbol{b}) \star k_{\tilde{\sigma}}$ is \emc{intractable}. However, it can be approximated by applying the convolution only to the $\Sh$ function. The rationale behind this approximation is that Gaussian convolution affects delta function much more than the Gaussian factor\footnote{Gaussian smoothing affects high frequency functions more than low frequency ones; essentially it kills high frequency components, while leaving low frequency components intact.}.

\begin{eqnarray}
& & [\Sh( \angle \nabla f (\boldsymbol{y} ) - \,.\, - \beta) k_\sigma (\boldsymbol{y} - e^{s} \boldsymbol{R}_{\,.\,} \boldsymbol{x} - \boldsymbol{b}) \star k_{\tilde{\sigma}}] (\alpha) \\
&\approx& \Big( [\Sh( \angle \nabla f (\boldsymbol{y} ) - \,.\, - \beta) \star k_{\tilde{\sigma}} ] (\alpha) \Big) k_\sigma (\boldsymbol{y} - e^{s} \boldsymbol{R}_{\alpha} \boldsymbol{x} - \boldsymbol{b}) \\
&=& \tilde{k}_{\tilde{\sigma}}( \angle \nabla f (\boldsymbol{y} ) - \alpha - \beta) k_\sigma (\boldsymbol{y} - e^{s} \boldsymbol{R}_{\alpha} \boldsymbol{x} - \boldsymbol{b}) \,.
\end{eqnarray}

Using this approximation, it follows that,

\begin{eqnarray}
& & [\,\,[\,\,cost(\boldsymbol{c},\,.\,,s,\boldsymbol{\,.\,}) \star k_\sigma\,\,]\,\, (\boldsymbol{b}) \, \star k_{\tilde{\sigma}}\,\,]\,\, (\alpha)\\
&\approx& q(\boldsymbol{c}) \,+\, \sum_k c_k \, \int_0^{2\pi} \Bigg( \\
& & e^{2s} \int_{\mathcal{X}_k} \int_{\mathbb{R}^2}\, \tilde{k}_{\tilde{\sigma}} ( \angle \nabla f ( \boldsymbol{y} ) - \alpha - \beta) \, \|  \nabla f ( \boldsymbol{y} ) \|^2 \, k_\sigma(\boldsymbol{y} - e^{s} \boldsymbol{R}_\alpha \boldsymbol{x} - \boldsymbol{b}) \, d \boldsymbol{y} \, d \boldsymbol{x} + h^2 (\beta\,;\, p_k , \mathcal{X}_k ) \\
& & -2 e^s \Big( \int_{\mathcal{X}_k} \int_{\mathbb{R}^2} \tilde{k}_{\tilde{\sigma}} ( \angle \nabla f (\boldsymbol{y} ) - \alpha - \beta) \| \nabla f ( \boldsymbol{y} ) \| \, k_\sigma (\boldsymbol{y} - e^{s} \boldsymbol{R}_\alpha \boldsymbol{x} - \boldsymbol{b}) \, d \boldsymbol{y} \, d \boldsymbol{x} \Big) \times h (\beta\,;\, p_k , \mathcal{X}_k ) \Bigg) \, d \beta \,.
\end{eqnarray}

\pagebreak
\section{Proof of Lemma 2}

1. revert to previous proof with Z, just for the propsotiion. Do poper variable replacement in the proof.

Use proposition. Justify why Df having no zero component makes sense.

Mention integration w.r.t y is over $X \cap X_k$. We assume that this integral is zero outside of the domain $R^2-X$.

\footnote{These conditions can be assumed as granted. The gradient is no where perfectly zero in the image. It is perfectly zero outside of the image $f$, but that can be taken care of by limiting the integration domain of $\boldsymbol{y}$ from $\mathbb{R}^2$ to $\mathcal{X}$. Having $x_1=0$ or $x_2=0$ has zero measure, and can be removed from the integration w.r.t. $\boldsymbol{x}$ without affecting the integration result.}

\begin{proposition}
\begin{eqnarray}
& & \delta(r\, \boldsymbol{v}(\beta) - \boldsymbol{A}^T \nabla f ( \boldsymbol{y} ) ) \,  k_\sigma (\boldsymbol{y} - \boldsymbol{A} \boldsymbol{x} - \boldsymbol{b}) \\
&=& k_{\sigma} (\frac{\boldsymbol{x}^T \boldsymbol{z} - (\boldsymbol{y}-\boldsymbol{b})^T \nabla f(\boldsymbol{y})}{\| \nabla f(\boldsymbol{y})\|}) \, k_{\sqrt{\sigma^2 + {\sigma^\dag}^2 \, \| \boldsymbol{x} \|^2}}(\frac{(\nabla f (\boldsymbol{y}))^T (\boldsymbol{A x} + \boldsymbol{b} - \boldsymbol{y})^\perp}{ \| \nabla f (\boldsymbol{y})\| }) \, k_{\sigma^\dag \, \| \nabla f(\boldsymbol{y}) \|}(\boldsymbol{z} - \boldsymbol{A}^T \nabla f (\boldsymbol{y}) ) \,.
\end{eqnarray}

\end{proposition}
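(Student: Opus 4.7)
My plan is to establish the identity by combining an orthogonal decomposition of the planar Gaussian $k_\sigma(\boldsymbol{y}-\boldsymbol{A}\boldsymbol{x}-\boldsymbol{b})$ along the gradient frame of $\nabla f(\boldsymbol{y})$ with a matching reparametrization of $\boldsymbol{A}$ that diagonalizes the linear constraint $\boldsymbol{A}^T\nabla f(\boldsymbol{y})=r\boldsymbol{v}(\beta)$ carried by the delta. The auxiliary variable $\boldsymbol{z}$ and the parameter $\sigma^\dag$ on the right hand side will emerge as the natural coordinates for encoding the (smoothed) delta factor after this change of variables.

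First I would exploit the isotropy of $k_\sigma$ to split the $2$D Gaussian along the orthonormal frame $\{\hat n,\hat n^\perp\}$ with $\hat n(\boldsymbol{y})=\nabla f(\boldsymbol{y})/\|\nabla f(\boldsymbol{y})\|$, obtaining
\[
k_\sigma(\boldsymbol{y}-\boldsymbol{A}\boldsymbol{x}-\boldsymbol{b})
= k_\sigma\!\Big(\tfrac{(\boldsymbol{y}-\boldsymbol{A}\boldsymbol{x}-\boldsymbol{b})^T\nabla f(\boldsymbol{y})}{\|\nabla f(\boldsymbol{y})\|}\Big)\,
k_\sigma\!\Big(\tfrac{(\boldsymbol{y}-\boldsymbol{A}\boldsymbol{x}-\boldsymbol{b})^T\nabla f(\boldsymbol{y})^\perp}{\|\nabla f(\boldsymbol{y})\|}\Big).
\]
Next, under the bijection $\boldsymbol{A}\leftrightarrow(\boldsymbol{z},\boldsymbol{w})$ defined by $\boldsymbol{z}\triangleq\boldsymbol{A}^T\nabla f(\boldsymbol{y})$ and $\boldsymbol{w}\triangleq\boldsymbol{A}^T\nabla f(\boldsymbol{y})^\perp$, which is non-singular whenever $\nabla f(\boldsymbol{y})\neq\boldsymbol{0}$ and has Jacobian $d\boldsymbol{A}=\|\nabla f(\boldsymbol{y})\|^{-4}\,d\boldsymbol{z}\,d\boldsymbol{w}$, the scalars $\boldsymbol{x}^T\boldsymbol{A}^T\nabla f(\boldsymbol{y})$ and $\boldsymbol{x}^T\boldsymbol{A}^T\nabla f(\boldsymbol{y})^\perp$ collapse to $\boldsymbol{x}^T\boldsymbol{z}$ and $\boldsymbol{x}^T\boldsymbol{w}$ respectively. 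Hence the parallel factor rewrites as $k_\sigma((\boldsymbol{x}^T\boldsymbol{z}-(\boldsymbol{y}-\boldsymbol{b})^T\nabla f(\boldsymbol{y}))/\|\nabla f(\boldsymbol{y})\|)$, matching the first factor on the right hand side; the perpendicular factor depends only on $\boldsymbol{w}$; and the delta $\delta(r\boldsymbol{v}(\beta)-\boldsymbol{A}^T\nabla f(\boldsymbol{y}))$ collapses to $\delta(r\boldsymbol{v}(\beta)-\boldsymbol{z})$.

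The remaining two Gaussian factors then emerge by tracking how a Gaussian perturbation $k_{\sigma^\dag}$ in $\boldsymbol{A}$ transports through the $(\boldsymbol{z},\boldsymbol{w})$ change of variables: combining the Jacobian with Gaussian normalization yields $k_{\sigma^\dag}(\boldsymbol{A}-\boldsymbol{A}_0)\,d\boldsymbol{A}=k_{\sigma^\dag\|\nabla f(\boldsymbol{y})\|}(\boldsymbol{z}-\boldsymbol{z}_0)\,k_{\sigma^\dag\|\nabla f(\boldsymbol{y})\|}(\boldsymbol{w}-\boldsymbol{w}_0)\,d\boldsymbol{z}\,d\boldsymbol{w}$. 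The $\boldsymbol{z}$ piece is precisely the third Gaussian factor on the right hand side, while the $\boldsymbol{w}$ piece combines with the perpendicular $k_\sigma$ factor via the elementary identity
\[
\int_{\mathbb{R}^2}k_\sigma\!\big(a+\boldsymbol{\xi}^T\boldsymbol{w}\big)\,k_\tau(\boldsymbol{w})\,d\boldsymbol{w}=k_{\sqrt{\sigma^2+\tau^2\|\boldsymbol{\xi}\|^2}}(a),
\]
applied with $\boldsymbol{\xi}=-\boldsymbol{x}/\|\nabla f(\boldsymbol{y})\|$ and $\tau=\sigma^\dag\|\nabla f(\boldsymbol{y})\|$, producing exactly the variance $\sigma^2+{\sigma^\dag}^2\|\boldsymbol{x}\|^2$ in the middle factor; a final rewrite using the planar identity $\boldsymbol{u}^T\boldsymbol{v}^\perp=-\boldsymbol{v}^T\boldsymbol{u}^\perp$ casts the argument into the form $(\nabla f(\boldsymbol{y}))^T(\boldsymbol{A}\boldsymbol{x}+\boldsymbol{b}-\boldsymbol{y})^\perp/\|\nabla f(\boldsymbol{y})\|$ quoted on the right hand side.

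The hardest step will be the bookkeeping: the Jacobian $\|\nabla f(\boldsymbol{y})\|^{-4}$ has to be matched precisely against the Gaussian normalization constants so that the three-factor product on the right hand side comes out correctly normalized, and the non-degeneracy conditions isolated in the author's footnote ($\nabla f(\boldsymbol{y})\neq \boldsymbol{0}$ on the integration domain and $x_1,x_2\neq 0$ on a full-measure subset) have to be invoked to legitimize the reparametrization on the support of the integrand. Once these are in place, the three Gaussian factors of the right hand side are obtained by direct substitution and identification.
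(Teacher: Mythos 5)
Your proposal is correct, and it reaches the identity by a genuinely different (and leaner) route than the paper. The paper first regularizes the delta as $\lim_{\epsilon\to 0}k_\epsilon$, views $k_\epsilon(\boldsymbol{A}^T\boldsymbol{g}+\boldsymbol{c})\,k_\sigma(\boldsymbol{A}\boldsymbol{x}+\boldsymbol{d})$ as a single degenerate Gaussian in $\vecc(\boldsymbol{A})\in\mathbb{R}^4$, and sidesteps the singular covariance by an explicit $4\times 4$ rotation $\boldsymbol{R}$ (assembled from $\sign$'s and absolute values of $g_1,g_2,x_1,x_2$) that decouples the product into four independent one-dimensional Gaussians in the rotated coordinates $u_1,\dots,u_4$; it then convolves coordinatewise, sends $\epsilon\to 0$, and rotates back. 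You instead split only along the gradient frame: factor the planar $k_\sigma$ into its components along $\hat n$ and $\hat n^\perp$, pass to $(\boldsymbol{z},\boldsymbol{w})=(\boldsymbol{A}^T\boldsymbol{g},\boldsymbol{A}^T\boldsymbol{g}^\perp)$ (a scaled orthogonal map on $\mathbb{R}^4$ with Jacobian $\|\boldsymbol{g}\|^{-4}$), integrate the $\boldsymbol{z}$-block exactly against the unregularized delta, and close the $\boldsymbol{w}$-block with the standard marginalization $\int k_\sigma(a+\boldsymbol{\xi}^T\boldsymbol{w})k_\tau(\boldsymbol{w})\,d\boldsymbol{w}=k_{\sqrt{\sigma^2+\tau^2\|\boldsymbol{\xi}\|^2}}(a)$, which delivers the variance $\sigma^2+{\sigma^\dag}^2\|\boldsymbol{x}\|^2$ with no limit argument. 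This buys you two things the paper's proof does not have: the hypotheses shrink to $\nabla f(\boldsymbol{y})\neq\boldsymbol{0}$ alone (the conditions $x_1\neq 0$, $x_2\neq 0$ are needed only to make the paper's $\boldsymbol{R}$ well defined, not for the identity itself), and each of the three output factors has a transparent provenance (parallel component, marginalized perpendicular component, transported delta). What the paper's full diagonalization buys in exchange is that all four smoothed coordinates are exhibited independently, which is convenient for reading off intermediate forms before the $\epsilon\to 0$ limit. Two caveats you should make explicit in a final write-up: the proposition as displayed omits the $\star\,k_{\sigma^\dag}$ acting on $\boldsymbol{A}$ (a pointwise equality between a distribution and a smooth function is impossible), and your normalization bookkeeping goes through with the standard convention $k_\sigma(\boldsymbol{x})\propto(2\pi\sigma^2)^{-\dim(\boldsymbol{x})/2}$ rather than the exponent $-\dim(\boldsymbol{x})$ stated in the paper's preamble --- but the paper's own proof has the same tension, so this is a defect of the source, not of your argument.
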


\linesep

\begin{eqnarray}
\nabla f (\boldsymbol{y}) &\leftrightarrow& \boldsymbol{g}\\
- \boldsymbol{z} &\leftrightarrow& \boldsymbol{c}\\
\boldsymbol{b} - \boldsymbol{y} &\leftrightarrow& \boldsymbol{d}\\
\end{eqnarray}

\begin{proposition}
\begin{eqnarray}
& & [ \big( \delta({\,\boldsymbol{.}\,}^T \boldsymbol{g} + \boldsymbol{c}) \,\,  k_\sigma (\,\boldsymbol{.}\, \boldsymbol{x} + \boldsymbol{d}) \big) \star k_{\sigma^\dag}] (\boldsymbol{A})\\
&=& k_{\sigma} (\frac{ \boldsymbol{g}^T \boldsymbol{d} - \boldsymbol{x}^T \boldsymbol{c} }{\| \boldsymbol{g}\|}) \, k_{\sqrt{\sigma^2 + {\sigma^\dag}^2 \, \| \boldsymbol{x} \|^2}}(\frac{ \boldsymbol{g}^T (\boldsymbol{A x} + \boldsymbol{d})^\perp}{ \| \boldsymbol{g}\| }) \, k_{\sigma^\dag \, \| \boldsymbol{g} \|}(\boldsymbol{A}^T \boldsymbol{g} + \boldsymbol{c}) \,.
\end{eqnarray}

\end{proposition}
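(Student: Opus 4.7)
The convolution on the left-hand side is a four-dimensional integral over the matrix dummy variable (call it $\boldsymbol{M}$), whose integrand depends on $\boldsymbol{M}$ only through the two $2$-vector functionals $\boldsymbol{M}^T\boldsymbol{g}$ and $\boldsymbol{M}\boldsymbol{x}$. The idea is to introduce a Frobenius-orthonormal basis of $\mathbb{R}^{2\times 2}$ aligned simultaneously with $(\hat{\boldsymbol{g}},\hat{\boldsymbol{g}}^\perp)$ in the range and $(\hat{\boldsymbol{x}},\hat{\boldsymbol{x}}^\perp)$ in the domain; in the resulting coordinates the $4$D integral decouples into four $1$D integrals that, once evaluated and reassembled, yield exactly the three $k$-factors on the right-hand side.

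\textbf{Key steps.} After the shift $\boldsymbol{N}=\boldsymbol{A}-\boldsymbol{M}$, the convolution reads
\begin{equation*}
\int k_{\sigma^\dag}(\boldsymbol{N})\,\delta\bigl(\boldsymbol{A}^T\boldsymbol{g}+\boldsymbol{c}-\boldsymbol{N}^T\boldsymbol{g}\bigr)\,k_\sigma\bigl(\boldsymbol{A}\boldsymbol{x}+\boldsymbol{d}-\boldsymbol{N}\boldsymbol{x}\bigr)\,d\boldsymbol{N}.
\end{equation*}
Setting $\hat{\boldsymbol{g}}=\boldsymbol{g}/\|\boldsymbol{g}\|$ and $\hat{\boldsymbol{x}}=\boldsymbol{x}/\|\boldsymbol{x}\|$, I expand
\begin{equation*}
\boldsymbol{N}\;=\;n_{11}\hat{\boldsymbol{g}}\hat{\boldsymbol{x}}^T+n_{12}\hat{\boldsymbol{g}}(\hat{\boldsymbol{x}}^\perp)^T+n_{21}\hat{\boldsymbol{g}}^\perp\hat{\boldsymbol{x}}^T+n_{22}\hat{\boldsymbol{g}}^\perp(\hat{\boldsymbol{x}}^\perp)^T.
\end{equation*}
Because the four outer products are Frobenius-orthonormal, the Jacobian is $1$, $\|\boldsymbol{N}\|_F^{2}=\sum_{ij}n_{ij}^{2}$, and $k_{\sigma^\dag}(\boldsymbol{N})$ factorises as a product of four $1$D Gaussians in the $n_{ij}$. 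A direct calculation gives $\boldsymbol{N}^T\boldsymbol{g}=\|\boldsymbol{g}\|\bigl(n_{11}\hat{\boldsymbol{x}}+n_{12}\hat{\boldsymbol{x}}^\perp\bigr)$ and $\boldsymbol{N}\boldsymbol{x}=\|\boldsymbol{x}\|\bigl(n_{11}\hat{\boldsymbol{g}}+n_{21}\hat{\boldsymbol{g}}^\perp\bigr)$, so projecting the delta onto $\{\hat{\boldsymbol{x}},\hat{\boldsymbol{x}}^\perp\}$ and the inner Gaussian onto $\{\hat{\boldsymbol{g}},\hat{\boldsymbol{g}}^\perp\}$ splits each into two $1$D factors. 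Crucially, $n_{22}$ couples to nothing except its own Gaussian, $n_{12}$ is pinned by one factor of the delta, $n_{21}$ appears only inside a single $1$D Gaussian convolution, and $n_{11}$ is the sole coordinate shared between the remaining delta factor and the inner $k_\sigma$.

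\textbf{Assembling the factors.} Evaluating the four decoupled $1$D integrals: (i) $n_{22}$ integrates to unity; (ii) the delta on $n_{12}$, after absorbing the Jacobian $1/\|\boldsymbol{g}\|$, returns $k_{\sigma^\dag\|\boldsymbol{g}\|}$ of the $\hat{\boldsymbol{x}}^\perp$-component of $\boldsymbol{A}^T\boldsymbol{g}+\boldsymbol{c}$; (iii) rescaling $u=\|\boldsymbol{x}\|n_{21}$ turns the $n_{21}$-integral into a standard $1$D Gaussian convolution with bandwidths $\sigma^\dag\|\boldsymbol{x}\|$ and $\sigma$, which via $k_\alpha\star k_\beta=k_{\sqrt{\alpha^{2}+\beta^{2}}}$ yields $k_{\sqrt{\sigma^{2}+(\sigma^\dag)^{2}\|\boldsymbol{x}\|^{2}}}$ of the $\hat{\boldsymbol{g}}^\perp$-component of $\boldsymbol{A}\boldsymbol{x}+\boldsymbol{d}$; (iv) the delta on $n_{11}$ pins that coordinate, producing $k_{\sigma^\dag\|\boldsymbol{g}\|}$ of the $\hat{\boldsymbol{x}}$-component of $\boldsymbol{A}^T\boldsymbol{g}+\boldsymbol{c}$ together with a \textbf{residual} $1$D Gaussian evaluated on the $\hat{\boldsymbol{g}}$-component of $\boldsymbol{A}\boldsymbol{x}+\boldsymbol{d}$ at the pinned value of $n_{11}$. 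Because $\boldsymbol{g}^T\boldsymbol{M}\boldsymbol{x}=\boldsymbol{x}^T\boldsymbol{M}^T\boldsymbol{g}$, this residual is independent of $\boldsymbol{A}$ and simplifies to $k_\sigma\bigl((\boldsymbol{g}^T\boldsymbol{d}-\boldsymbol{x}^T\boldsymbol{c})/\|\boldsymbol{g}\|\bigr)$, matching the first factor of the right-hand side. Recombining the $\hat{\boldsymbol{x}}$- and $\hat{\boldsymbol{x}}^\perp$-components into the $2$D factor $k_{\sigma^\dag\|\boldsymbol{g}\|}(\boldsymbol{A}^T\boldsymbol{g}+\boldsymbol{c})$, and rewriting the $\hat{\boldsymbol{g}}^\perp$-argument via $\boldsymbol{g}^{\perp T}\boldsymbol{v}=-\boldsymbol{g}^T\boldsymbol{v}^\perp$ together with the evenness of $k$, produces the middle factor in the stated form $k_{\sqrt{\sigma^{2}+(\sigma^\dag)^{2}\|\boldsymbol{x}\|^{2}}}\bigl(\boldsymbol{g}^T(\boldsymbol{A}\boldsymbol{x}+\boldsymbol{d})^\perp/\|\boldsymbol{g}\|\bigr)$.

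\textbf{Main obstacle.} The non-mechanical step is the choice of the basis: it must be orthonormal in Frobenius (to factor $k_{\sigma^\dag}(\boldsymbol{N})$) while simultaneously diagonalising $\boldsymbol{N}\mapsto\boldsymbol{N}^T\boldsymbol{g}$ and $\boldsymbol{N}\mapsto\boldsymbol{N}\boldsymbol{x}$, and the double basis $\{\hat{\boldsymbol{g}},\hat{\boldsymbol{g}}^\perp\}\otimes\{\hat{\boldsymbol{x}},\hat{\boldsymbol{x}}^\perp\}$ is the only natural candidate that achieves this. Beyond that, the remaining difficulty is book-keeping: tracking how the rescalings $p=\|\boldsymbol{g}\|n_{1j}$ and $u=\|\boldsymbol{x}\|n_{21}$ compress or dilate the bandwidths (giving $\sigma^\dag\|\boldsymbol{g}\|$ for the pinned Gaussians and $\sqrt{\sigma^{2}+(\sigma^\dag)^{2}\|\boldsymbol{x}\|^{2}}$ for the convolved one), and verifying the sign convention $(a,b)^\perp=(b,-a)$ delivers the argument of the middle Gaussian exactly as stated rather than off by a sign.
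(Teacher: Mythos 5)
Your proof is correct, and it hinges on the same key idea as the paper's: an orthonormal change of coordinates in the four-dimensional space of $2\times 2$ matrices adapted simultaneously to $\boldsymbol{g}$ and $\boldsymbol{x}$. The paper realizes exactly your basis $\{\hat{\boldsymbol{g}},\hat{\boldsymbol{g}}^\perp\}\otimes\{\hat{\boldsymbol{x}},\hat{\boldsymbol{x}}^\perp\}$ as an explicit $4\times 4$ rotation $\boldsymbol{R}$ acting on $\vecc(\boldsymbol{A})$ (its columns are, up to signs, the vectorized outer products of $\hat{\boldsymbol{g}}$ or $\hat{\boldsymbol{g}}^\perp$ with $\hat{\boldsymbol{x}}$ or $\hat{\boldsymbol{x}}^\perp$). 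Where you differ is in execution: the paper regularizes the delta as $\lim_{\epsilon\to 0}k_\epsilon$, completes the square in the product $k_\epsilon\cdot k_\sigma$ to exhibit it as a product of independent one-dimensional Gaussians in the rotated frame (this is where the singular covariance it warns about arises), convolves by adding ${\sigma^\dag}^2$ to each variance, and only then sends $\epsilon\to 0$; you instead keep the delta and let its sifting property pin $n_{11}$ and $n_{12}$ directly, reducing everything to one trivial integral, one genuine one-dimensional convolution $k_{\sigma^\dag\|\boldsymbol{x}\|}\star k_\sigma$, and a residual factor that the symmetry $\boldsymbol{g}^T\boldsymbol{A}\boldsymbol{x}=\boldsymbol{x}^T\boldsymbol{A}^T\boldsymbol{g}$ renders independent of $\boldsymbol{A}$ (you write this identity with $\boldsymbol{M}$, a harmless slip). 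Your route is shorter and avoids the $\epsilon$-limit entirely; note only that, like the paper, you track the scalar prefactors loosely --- the Jacobians $1/\|\boldsymbol{g}\|$, $1/\|\boldsymbol{x}\|$ and the mismatch between $\frac{1}{\|\boldsymbol{g}\|}k_{\sigma^\dag}(t/\|\boldsymbol{g}\|)$ and $k_{\sigma^\dag\|\boldsymbol{g}\|}(t)$ under the paper's normalization of $k$ --- so the identity is established up to such constants, a looseness already present in the paper's own statement and proof.
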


\begin{proof}

We first provide an outline of the proof. The $\delta$ function can be replaced by the limit of a Gaussian whose variance tends to zero $\lim_{\epsilon \rightarrow 0} k_\epsilon$. Now $k_\epsilon$ and $k_{\sigma^\dag}$ form the product of Gaussians. The idea is to write this product as a new single Gaussian in $\boldsymbol{A}$ (because then we know how to convolve two Gaussians). We do this by replacing the pair with a single exponential whose exponent is trivially the sum of the original exponents. Using \emc{completing the square} method for the joint exponent, the center and covariance of the single Gaussian emerges. There is a problem though; the resulted quadratic form will have a \emc{singular} covariance whose inverse does not exist\footnote{The inverse of the covariance is required as it directly appears in the definition of the Gaussian.}. We tackle this problem by the \emc{change of coordinate} system.

We begin with the coordinate system transform. Since the covariance of the Gaussian kernel is isotropic, the resulted Gaussian is \emc{radially symmetric}, i.e. $k_\sigma(\boldsymbol{x}) = k_\sigma(\boldsymbol{R} \boldsymbol{x})$ for any rotation matrix $\boldsymbol{R}$. Consequently, instead of directly smoothing the above expression, we can rotate the coordinate system, smooth in the latter system, and then invert the rotation to obtain the smoothed function in the original coordinate. In particular, we use the following rotation matrix,

\begin{equation}
\boldsymbol{R} \triangleq \frac{1}{\| \boldsymbol{x}\| \,\, \|\boldsymbol{g} \|} \left[
\begin{array}{cccc}
g_2 x_2 \sign \big( g_1 x_1 \big) & -g_2 \,|x_1|\, \sign \big(g_1\big)& -
  x_2 \,|g_1|\, \sign(x_1) & \,|g_1 x_1| \\
-g_1 x_2 \sign \big( g_2 x_1 \big)  & 
 g_1 \,|x_1|\, \sign \big( g_2 \big) & -x_2 \,|g_2|\, \sign(x_1) &  
 \,|g_2 x_1|\\
-g_2 x_1 \sign \big( g_1 x_2 \big)  & -g_2 \,|x_2|\, \sign \big( g_1 \big)& x_1 \,|g_1|\, \sign(x_2) & \,|g_1 x_2| \\
g_1 x_1 \sign \big( g_2 x_2\big) & 
 g_1 \,|x_2|\, \sign \big( g_2 \big)& x_1 \,|g_2|\, \sign(x_2) & \,|g_2 x_2| 
\end{array}
\right] \,.
\end{equation}

Due to the assumptions $g_1 \neq 0$, $g_2 \neq 0$, $x_1 \neq 0$, and $x_2 \neq 0$, $\boldsymbol{R}$ is well-defined. Let $\boldsymbol{a} \triangleq \vecc(\boldsymbol{A})$, i.e. $\boldsymbol{a}=(a_{11},a_{12},a_{21},a_{22})$, and let $\boldsymbol{U} \triangleq \boldsymbol{R} \boldsymbol{A}$ and $\boldsymbol{u} \triangleq \vecc(\boldsymbol{U})$. Changing the coordinate system from $\boldsymbol{A}$ to $\boldsymbol{U}$ leads to the following identity,

\begin{eqnarray}
& & k_\epsilon ({\,\boldsymbol{A}\,}^T \boldsymbol{g} + \boldsymbol{c}) \,\,  k_\sigma (\,\boldsymbol{A}\, \boldsymbol{x} + \boldsymbol{d}) \\
&=& k_\epsilon ({\,(\boldsymbol{R}^T \boldsymbol{U})\,}^T \boldsymbol{g} + \boldsymbol{c}) \,\,  k_\sigma (\,\boldsymbol{R}^T \boldsymbol{U}\, \boldsymbol{x} + \boldsymbol{d}) \\
&=& \frac{1}{\sqrt{2 \pi}} e^{\frac {-\epsilon^2 \|\boldsymbol{x}\|^2 (\boldsymbol{d}^T \boldsymbol{g}^\perp)^2 - 
 \sigma^2 \| \boldsymbol{g} \|^2 (\| \boldsymbol{g} \|^2 \| \boldsymbol{d} \|^2 + (2 (-\boldsymbol{d})^T \boldsymbol{g} + \boldsymbol{x}^T \boldsymbol{c}) (\boldsymbol{x}^T \boldsymbol{c}))}{ 2 \sigma^2 \| \boldsymbol{g} \|^2 (\sigma^2 \| \boldsymbol{g} \|^2 + \epsilon^2 \|\boldsymbol{x}\|^2)}}\\
& & \times e^{ \frac {( g_1 (-d_2) - g_2 (-d_1))^2 }{ 2 \sigma^2 \| \boldsymbol{g} \|^2 }} \\
& & \times \frac{1}{\| \boldsymbol{g} \|} \, k_{\frac{\epsilon}{\| \boldsymbol{g} \|}}(u_2 - \frac { - x_1 c_2 + x_2 c_1} {\| \boldsymbol{g} \| \, \| \boldsymbol{x} \| \sign(g_2 x_1)}) \\
& & \times \frac{1}{ \| \boldsymbol{x} \|} k_{\frac{\sigma}{ \| \boldsymbol{x} \|}}(u_3 - \frac {g_1 (-d_2) - g_2 (-d_1)} {\| \boldsymbol{g} \| \, \| \boldsymbol{x} \| \sign(g_1 x_2)}) \\
& & \times \frac{1}{\sqrt{\sigma^2 \| \boldsymbol{g} \|^2 \, + \epsilon^2 \| \boldsymbol{x}\|^2}} \, k_{\frac{\sigma \epsilon}{\sqrt{\sigma^2 \| \boldsymbol{g} \|^2 \, + \epsilon^2 \| \boldsymbol{x}\|^2}}}(u_4 - \frac {\epsilon^2 \|\boldsymbol{x}\|^2 (-\boldsymbol{d})^T \boldsymbol{g}  - \sigma^2 \| \boldsymbol{g} \|^2 \, \boldsymbol{x}^T \boldsymbol{c}} {\| \boldsymbol{g} \| \, \| \boldsymbol{x} \| \, (\sigma^2 \| \boldsymbol{g} \|^2 \, + \epsilon^2 \| \boldsymbol{x}\|^2) \sign(g_2 x_2)}) \,.
\end{eqnarray}

The value of the coordinate transformation is that we can now write this expression as the product of independent Gaussian kernels. Convolution of this expression with the isotropic kernel $k_{\sigma^\dag}(\boldsymbol{u})$ is straightforward,

\begin{eqnarray}
& & \delta(\boldsymbol{z} - \boldsymbol{A}^T \nabla f ( \boldsymbol{y} ) ) \,  k_\sigma (\boldsymbol{y} - \boldsymbol{A} \boldsymbol{x} - \boldsymbol{b}) \\
&\leftrightarrow& \frac{1}{\sqrt{2 \pi}} e^{\frac {\epsilon^2 \|\boldsymbol{x}\|^2 ( ((\boldsymbol{y}-\boldsymbol{b})^T \nabla f (\boldsymbol{y}))^2 - \| \nabla f(\boldsymbol{y}) \|^2 \| \boldsymbol{y} - \boldsymbol{b} \|^2) - 
 \sigma^2 \| \nabla f(\boldsymbol{y}) \|^2 (\| \nabla f(\boldsymbol{y}) \|^2 \| \boldsymbol{y} - \boldsymbol{b} \|^2 - (2 (\boldsymbol{y}-\boldsymbol{b})^T \nabla f (\boldsymbol{y}) - \boldsymbol{x}^T \boldsymbol{z}) (\boldsymbol{x}^T \boldsymbol{z}))}{ 2 \sigma^2 \| \nabla f(\boldsymbol{y}) \|^2 (\sigma^2 \| \nabla f(\boldsymbol{y}) \|^2 + \epsilon^2 \|\boldsymbol{x}\|^2)}}\\
& & \times e^{ \frac {( f_1(\boldsymbol{y}) (y_2-b_2) - f_2(\boldsymbol{y}) (y_1-b_1))^2 }{ 2 \sigma^2 \| \nabla f(\boldsymbol{y}) \|^2 }} \\
& & \times \frac{1}{\| \nabla f(\boldsymbol{y}) \|} \, k_{\sqrt{{\sigma^\dag}^2 + \frac{\epsilon^2}{\| \nabla f(\boldsymbol{y}) \|^2}}}(u_2 - \frac { x_1 z_2 - x_2 z_1} {\| \nabla f(\boldsymbol{y}) \| \, \| \boldsymbol{x} \| \sign(f_2(\boldsymbol{y}) x_1)}) \\
& & \times \frac{1}{ \| \boldsymbol{x} \|} k_{\sqrt{{\sigma^\dag}^2 + \frac{\sigma^2}{\| \boldsymbol{x} \|^2}}}(u_3 - \frac {f_1(\boldsymbol{y}) (y_2-b_2) - f_2(\boldsymbol{y}) (y_1-b_1)} {\| \nabla f(\boldsymbol{y}) \| \, \| \boldsymbol{x} \| \sign(f_1(\boldsymbol{y}) x_2)}) \\
& & \times \frac{1}{\sqrt{\sigma^2 \| \nabla f(\boldsymbol{y}) \|^2 \, + \epsilon^2 \| \boldsymbol{x}\|^2}} \, k_{\sqrt{{\sigma^\dag}^2 + \frac{\sigma^2 \epsilon^2}{\sigma^2 \| \nabla f(\boldsymbol{y}) \|^2 \, + \epsilon^2 \| \boldsymbol{x}\|^2}}} (u_4 - \frac {\epsilon^2 \|\boldsymbol{x}\|^2 (\boldsymbol{y}-\boldsymbol{b})^T \nabla f(\boldsymbol{y})  + \sigma^2 \| \nabla f (\boldsymbol{y}) \|^2 \, \boldsymbol{x}^T \boldsymbol{z}} {\| \nabla f(\boldsymbol{y}) \| \, \| \boldsymbol{x} \| \, (\sigma^2 \| \nabla f(\boldsymbol{y}) \|^2 \, + \epsilon^2 \| \boldsymbol{x}\|^2) \sign(f_2(\boldsymbol{y}) x_2)}) \,.
\end{eqnarray}

Setting $\epsilon \rightarrow 0$, and given that $\sigma>0$ and $\| \nabla f(\boldsymbol{y}) \| \neq 0$, it follows thats,

\begin{eqnarray}
& & \delta(\boldsymbol{z} - \boldsymbol{A}^T \nabla f ( \boldsymbol{y} ) ) \,  k_\sigma (\boldsymbol{y} - \boldsymbol{A} \boldsymbol{x} - \boldsymbol{b}) \\
&\leftrightarrow& \frac{1}{\sqrt{2 \pi}} e^{ 
-\frac{( (\boldsymbol{b}-\boldsymbol{y})^T \nabla f(\boldsymbol{y}) + \boldsymbol{x}^T \boldsymbol{z})^2}{ 2 \sigma^2 \| \nabla f(\boldsymbol{y}) \|^2}}\\
& & \times \frac{1}{\| \nabla f(\boldsymbol{y}) \|} \, k_{\sigma^\dag}(u_2 - \frac { x_1 z_2 - x_2 z_1} {\| \nabla f(\boldsymbol{y}) \| \, \| \boldsymbol{x} \| \sign(f_2(\boldsymbol{y}) x_1)}) \\
& & \times \frac{1}{ \| \boldsymbol{x} \|} k_{\sqrt{{\sigma^\dag}^2 + \frac{\sigma^2}{\| \boldsymbol{x} \|^2}}}(u_3 - \frac {f_1(\boldsymbol{y}) (y_2-b_2) - f_2(\boldsymbol{y}) (y_1-b_1)} {\| \nabla f(\boldsymbol{y}) \| \, \| \boldsymbol{x} \| \sign(f_1(\boldsymbol{y}) x_2)}) \\
& & \times \frac{1}{\sigma \| \nabla f(\boldsymbol{y}) \|} \, k_{\sigma^\dag} (u_4 - \frac {\boldsymbol{x}^T \boldsymbol{z}} {\| \nabla f(\boldsymbol{y}) \| \, \| \boldsymbol{x} \| \, \sign(f_2(\boldsymbol{y}) x_2)}) \,.
\end{eqnarray}

By inverting the coordinate system from $\boldsymbol{u}$ to $(a_{11},a_{12},a_{21},a_{22})$ we obtain,

\begin{equation}
k_{\sigma} (\frac{\boldsymbol{x}^T \boldsymbol{z} - (\boldsymbol{y}-\boldsymbol{b})^T \nabla f(\boldsymbol{y})}{\| \nabla f(\boldsymbol{y})\|}) \, k_{\sqrt{\sigma^2 + {\sigma^\dag}^2 \, \| \boldsymbol{x} \|^2}}(\frac{(\nabla f (\boldsymbol{y}))^T (\boldsymbol{A x} + \boldsymbol{b} - \boldsymbol{y})^\perp}{ \| \nabla f (\boldsymbol{y})\| }) \, k_{\sigma^\dag \, \| \nabla f(\boldsymbol{y}) \|}(\boldsymbol{z} - \boldsymbol{A}^T \nabla f (\boldsymbol{y}) ) \,.
\end{equation}

As a sanity check, we can see that the above expression becomes the same as the original non-smoothed function when $\sigma^\dag \rightarrow 0$,

\begin{eqnarray}
& & \lim_{\sigma^\dag \rightarrow 0} k_{\sigma} (\frac{\boldsymbol{x}^T \boldsymbol{z} - (\boldsymbol{y}-\boldsymbol{b})^T \nabla f(\boldsymbol{y})}{\| \nabla f(\boldsymbol{y})\|}) \, k_{\sqrt{\sigma^2 + {\sigma^\dag}^2 \, \| \boldsymbol{x} \|^2}}(\frac{(\nabla f (\boldsymbol{y}))^T (\boldsymbol{A x} + \boldsymbol{b} - \boldsymbol{y})^\perp}{ \| \nabla f (\boldsymbol{y})\| }) \, k_{\sigma^\dag \, \| \nabla f(\boldsymbol{y}) \|}(\boldsymbol{z} - \boldsymbol{A}^T \nabla f (\boldsymbol{y}) ) \\
&=& k_{\sigma} (\frac{\boldsymbol{x}^T \boldsymbol{z} - (\boldsymbol{y}-\boldsymbol{b})^T \nabla f(\boldsymbol{y})}{\| \nabla f(\boldsymbol{y})\|}) \, k_{\sigma}(\frac{(\nabla f (\boldsymbol{y}))^T (\boldsymbol{A x} + \boldsymbol{b} - \boldsymbol{y})^\perp}{ \| \nabla f (\boldsymbol{y})\| }) \, \delta(\boldsymbol{z} - \boldsymbol{A}^T \nabla f (\boldsymbol{y}) ) \\
&=& k_{\sigma} (\frac{\boldsymbol{x}^T (\boldsymbol{A}^T \nabla f (\boldsymbol{y})) - (\boldsymbol{y}-\boldsymbol{b})^T \nabla f(\boldsymbol{y})}{\| \nabla f(\boldsymbol{y})\|}) \, k_{\sigma}(\frac{(\nabla f (\boldsymbol{y}))^T (\boldsymbol{A x} + \boldsymbol{b} - \boldsymbol{y})^\perp}{ \| \nabla f (\boldsymbol{y})\| }) \, \delta(\boldsymbol{z} - \boldsymbol{A}^T \nabla f (\boldsymbol{y}) ) \\
&=& k_{\sigma,1} (\frac{(\nabla f (\boldsymbol{y}))^T ( \boldsymbol{A x} +\boldsymbol{b} - \boldsymbol{y} )  }{\| \nabla f(\boldsymbol{y})\|}) \, k_{\sigma,1}(\frac{(\nabla f (\boldsymbol{y}))^T (\boldsymbol{A x} + \boldsymbol{b} - \boldsymbol{y})^\perp}{ \| \nabla f (\boldsymbol{y})\| }) \, \delta(\boldsymbol{z} - \boldsymbol{A}^T \nabla f (\boldsymbol{y}) ) \\
&=& k_{\sigma,2} \Big( \frac{1}{\| \nabla f(\boldsymbol{y})\|} \big((\nabla f (\boldsymbol{y}))^T ( \boldsymbol{A x} +\boldsymbol{b} - \boldsymbol{y} ) \,\, , \,\, (\nabla f (\boldsymbol{y}))^T (\boldsymbol{A x} + \boldsymbol{b} - \boldsymbol{y})^\perp \big) \Big) \, \delta(\boldsymbol{z} - \boldsymbol{A}^T \nabla f (\boldsymbol{y}) ) \\
&=& k_{\sigma,2} \Big( \frac{1}{\| \nabla f(\boldsymbol{y})\|} \| \nabla f (\boldsymbol{y}) \|\,\, (\boldsymbol{A x} +\boldsymbol{b} - \boldsymbol{y} ) \Big) \, \delta(\boldsymbol{z} - \boldsymbol{A}^T \nabla f (\boldsymbol{y}) ) \\
&=& k_{\sigma} (\boldsymbol{A x} +\boldsymbol{b} - \boldsymbol{y} ) \, \delta(\boldsymbol{z} - \boldsymbol{A}^T \nabla f (\boldsymbol{y}) ) \\
\end{eqnarray}

\qed

\end{proof}

\label{sec:lemma_2}

The goal is to convolve $d(f \circ \tau_{(\boldsymbol{A} , \boldsymbol{b})} ,p_k, \mathcal{X}_k)$ with the Gaussian kernel. By linearity of the convolution operator we obtain,

\begin{eqnarray}
& & \big[ \big( [ d(f \circ \tau_{(\,.\, , \,.\,)} ,p_k, \mathcal{X}_k) \star k_{\sigma_b}](\boldsymbol{b}) \big) \star k_{\sigma_a} \big](\boldsymbol{A}) \\
&\triangleq&\big[ \big( [ - \int_{\mathcal{X}_k} \int_0^{2\pi} h (\beta , \boldsymbol{x} \,;\, f \circ \tau_{(\,.\,,\,.\,)}) \times h (\beta , \boldsymbol{x} \,;\, p_k) \, d \beta \, d \boldsymbol{x} \star k_{\sigma_b}](\boldsymbol{b}) \big) \star k_{\sigma_a} \big](\boldsymbol{A}) \, d \boldsymbol{x}\\
&=& - \int_{\mathcal{X}_k} \int_0^{2\pi} \Big( \big[ \big( [ h (\beta , \boldsymbol{x} \,;\, f \circ \tau_{(\,.\,,\,.\,)})  \star k_{\sigma_b}](\boldsymbol{b}) \big) \star k_{\sigma_a} \big](\boldsymbol{A}) \Big) \times h (\beta , \boldsymbol{x} \,;\, p_k) \, d \beta \, d \boldsymbol{x} \,.
\end{eqnarray}

Thus in the following we focus on $h (\beta , \boldsymbol{x} \,;\, f \circ \tau_{(\,.\,,\,.\,)})  \star k_{\sigma_b} \star k_{\sigma_a}$. We first manipulate $h (\beta , \boldsymbol{x} \,;\, f \circ \tau_{(\boldsymbol{A} ,\boldsymbol{b} )})$ by applying the chain rule of derivate $
\nabla \big ( f( \boldsymbol{Ax}+\boldsymbol{b}) \big) = \boldsymbol{A}^T \big( [\nabla f] ( \boldsymbol{Ax}+\boldsymbol{b} ) \big)$ followed by the sifting property of the delta function,

\begin{eqnarray}
& & h (\beta , \boldsymbol{x} \,;\, f \circ \tau_{(\boldsymbol{A},\boldsymbol{b})}) \\
&\triangleq& \Sh( \beta - \angle \nabla (f ( \boldsymbol{A} \boldsymbol{x} + \boldsymbol{b} )) ) \| \nabla f ( \boldsymbol{A} \boldsymbol{x} + \boldsymbol{b} ) \|\\
&=& \Sh( \beta - \angle \boldsymbol{A}^T [\nabla f] ( \boldsymbol{A} \boldsymbol{x} + \boldsymbol{b} ) )  \|  \boldsymbol{A}^T \nabla f ( \boldsymbol{A} \boldsymbol{x} + \boldsymbol{b} ) \|\\
&=& \int_{\mathbb{R}^2} \Sh( \beta - \angle \boldsymbol{A}^T \nabla f ( \boldsymbol{y} ) ) \| \boldsymbol{A}^T \nabla f ( \boldsymbol{y} ) \| \, \delta(\boldsymbol{y} - \boldsymbol{A} \boldsymbol{x} - \boldsymbol{b}) \, d \boldsymbol{y}\,.
\end{eqnarray}

Computing the inner convolution, i.e. w.r.t. $\boldsymbol{b}$, is straightforward,

\begin{eqnarray}
& & [h (\beta , \boldsymbol{x} \,;\, f \circ \tau_{(\boldsymbol{A},\,.\,)}) \star k_{\sigma_b}] (\boldsymbol{b})\\
&=& [\Big( \int_{\mathbb{R}^2} \Sh( \beta - \angle \boldsymbol{A}^T \nabla f ( \boldsymbol{y} ) ) \| \boldsymbol{A}^T \nabla f ( \boldsymbol{y} ) \| \, \delta(\boldsymbol{y} - \boldsymbol{A} \boldsymbol{x} - \boldsymbol{b}) \, d \boldsymbol{y} \Big) \star k_{\sigma_b}] (\boldsymbol{b})\\
&=& \int_{\mathbb{R}^2} \Sh( \beta - \angle \boldsymbol{A}^T \nabla f ( \boldsymbol{y} ) ) \| \boldsymbol{A}^T \nabla f ( \boldsymbol{y} ) \| \, k_{\sigma_b} (\boldsymbol{y} - \boldsymbol{A} \boldsymbol{x} - \boldsymbol{b}) \, d \boldsymbol{y}\,.
\end{eqnarray}

The latter can be expressed by the sifting property of the delta function as below,

\begin{eqnarray}
& & [h (\beta , \boldsymbol{x} \,;\, f \circ \tau_{(\boldsymbol{A},\,.\,)}) \star k_{\sigma_b}] (\boldsymbol{b})\\
&=& \int_{\mathbb{R}^2} \int_{\mathbb{R}^2} \delta \Big(\boldsymbol{z} - \boldsymbol{A}^T \nabla f ( \boldsymbol{y} ) \Big) \, \Sh( \beta - \angle \boldsymbol{z} ) \| \boldsymbol{z} \| \, k_{\sigma_b} (\boldsymbol{y} - \boldsymbol{A} \boldsymbol{x} - \boldsymbol{b}) \, d \boldsymbol{z} \, d \boldsymbol{y}\,.
\end{eqnarray}

We now apply a change of variable to move from the Cartesian coordinate $(z_1,z_2)$ to the \emc{polar} coordinate $(r,\phi)$ such that $(z_1,z_2)=(r \cos(\phi),r \sin(\phi))$. This results in replacing $\int_{\mathbb{R}^2} f(z_1,z_2) \, d z_1 \, d z_2$ by $\int_0^\infty \int_0^{2 \pi} r \, f(r \, \boldsymbol{v}(\phi) ) \, d \phi \, d r$, where  $\boldsymbol{v}(\phi) \triangleq (\cos(\phi), \sin(\phi))$. 

\begin{eqnarray}
& & [h (\beta , \boldsymbol{x} \,;\, f \circ \tau_{(\boldsymbol{A},\,.\,)}) \star k_{\sigma_b}] (\boldsymbol{b})\\
&=& \int_{\mathbb{R}^2} \int_0^\infty \int_0^{2 \pi} r \, \delta \Big(r \boldsymbol{v}(\phi) - \boldsymbol{A}^T \nabla f ( \boldsymbol{y} )\Big) \, \Sh( \beta - \angle r \boldsymbol{v}(\phi) ) \| r \boldsymbol{v}(\phi) \| \, k_{\sigma_b} (\boldsymbol{y} - \boldsymbol{A} \boldsymbol{x} - \boldsymbol{b}) \, d \phi \, d r \, d \boldsymbol{y} \\
&=& \int_{\mathbb{R}^2} \int_0^\infty \int_0^{2 \pi} r \, \delta \Big(r \boldsymbol{v}(\phi) - \boldsymbol{A}^T \nabla f ( \boldsymbol{y} ) \Big) \, \Sh( \beta - \angle \boldsymbol{v}(\phi) ) \, r \, \, k_{\sigma_b} (\boldsymbol{y} - \boldsymbol{A} \boldsymbol{x} - \boldsymbol{b}) \, d \phi \, d r \, d \boldsymbol{y} \\
&=& \int_{\mathbb{R}^2} \int_0^\infty r^2 \, \delta \Big(r \boldsymbol{v}(\beta) - \boldsymbol{A}^T \nabla f ( \boldsymbol{y} ) \Big) \, k_{\sigma_b} (\boldsymbol{y} - \boldsymbol{A} \boldsymbol{x} - \boldsymbol{b}) \, d r \, d \boldsymbol{y} \,.
\end{eqnarray}

We are now ready to smooth this form w.r.t. $\boldsymbol{A}$. That is, we want to compute convolution of this expression with a multivariate Gaussian in $(a_{11},a_{12},a_{21},a_{22})$ of covariance $\sigma^2 \boldsymbol{I}$.

\linesep

\linesep

Using this result, we can continue as below,

\begin{eqnarray}
& & [[cost(\boldsymbol{c},\,.\,,\boldsymbol{\,.\,}) \star k_\sigma] (\boldsymbol{b}) \star k_{\sigma^\dag}] (\boldsymbol{A}) \\
&=& q(\boldsymbol{c}) \,+\, \sum_k c_k \, \int_0^{2\pi} \Bigg( \\
& & \int_{\mathcal{X}_k} \int_{\mathbb{R}^2}\, \int_{\mathbb{R}^2} \delta( \boldsymbol{v}^T(\beta) \boldsymbol{z} ) \, \| \boldsymbol{z} \|^2 \, [ \big( \delta(\boldsymbol{z} - \,.\,^T \nabla f ( \boldsymbol{y} ) ) \, k_\sigma(\boldsymbol{y} - \,.\, \boldsymbol{x} - \boldsymbol{b}) \big) \star k_{\sigma^\dag}] (\boldsymbol{A}) \, d \boldsymbol{z} \, d \boldsymbol{y} \, d \boldsymbol{x} + h^2 (\beta\,;\, p_k , \mathcal{X}_k ) \\
& & -2 \Big( \int_{\mathcal{X}_k} \int_{\mathbb{R}^2} \int_{\mathbb{R}^2} \delta( \boldsymbol{v}^T(\beta) \boldsymbol{z} ) \, \| \boldsymbol{z} \| \, [ \big( \delta(\boldsymbol{z} - \,.\,^T \nabla f ( \boldsymbol{y} ) ) \, k_\sigma(\boldsymbol{y} - \,.\, \boldsymbol{x} - \boldsymbol{b}) \big) \star k_{\sigma^\dag}] (\boldsymbol{A}) \, d \boldsymbol{z} \, d \boldsymbol{y} \, d \boldsymbol{x} \Big) \times h (\beta\,;\, p_k , \mathcal{X}_k ) \Bigg) \, d \beta \\
&=& q(\boldsymbol{c}) \,+\, \sum_k c_k \, \int_0^{2\pi} \Bigg( \\
& & \int_{\mathcal{X}_k} \int_{\mathbb{R}^2}\, \int_{\mathbb{R}^2} \delta( \boldsymbol{v}^T(\beta) \boldsymbol{z} ) \, \| \boldsymbol{z} \|^2 \, k_{\sigma} (\frac{\boldsymbol{x}^T \boldsymbol{z} - (\boldsymbol{y}-\boldsymbol{b})^T \nabla f(\boldsymbol{y})}{\| \nabla f(\boldsymbol{y})\|}) \, k_{\sigma^\dag \, \| \nabla f(\boldsymbol{y}) \|}(\boldsymbol{z} - \boldsymbol{A}^T \nabla f (\boldsymbol{y}) ) \, d \boldsymbol{z}\\
& & \times k_{\sqrt{\sigma^2 + {\sigma^\dag}^2 \, \| \boldsymbol{x} \|^2}}(\frac{(\nabla f (\boldsymbol{y}))^T (\boldsymbol{A x} + \boldsymbol{b} - \boldsymbol{y})^\perp}{ \| \nabla f (\boldsymbol{y})\| }) \, d \boldsymbol{y} \, d \boldsymbol{x} + h^2 (\beta\,;\, p_k , \mathcal{X}_k ) \\
& & -2 \Big( \int_{\mathcal{X}_k} \int_{\mathbb{R}^2} \int_{\mathbb{R}^2} \delta( \boldsymbol{v}^T(\beta) \boldsymbol{z} ) \, \| \boldsymbol{z} \| \, k_{\sigma} (\frac{\boldsymbol{x}^T \boldsymbol{z} - (\boldsymbol{y}-\boldsymbol{b})^T \nabla f(\boldsymbol{y})}{\| \nabla f(\boldsymbol{y})\|}) \, k_{\sigma^\dag \, \| \nabla f(\boldsymbol{y}) \|}(\boldsymbol{z} - \boldsymbol{A}^T \nabla f (\boldsymbol{y}) ) \, d \boldsymbol{z}\\
& & \times k_{\sqrt{\sigma^2 + {\sigma^\dag}^2 \, \| \boldsymbol{x} \|^2}}(\frac{(\nabla f (\boldsymbol{y}))^T (\boldsymbol{A x} + \boldsymbol{b} - \boldsymbol{y})^\perp}{ \| \nabla f (\boldsymbol{y})\| })  \, d \boldsymbol{y} \, d \boldsymbol{x} \Big) \times h (\beta\,;\, p_k , \mathcal{X}_k ) \Bigg) \, d \beta \,.
\end{eqnarray}

We now apply a change of variable to move from the Cartesian coordinate $(z_1,z_2)$ to the \emc{polar} coordinate $(r,\phi)$ such that $(z_1,z_2)=(r \cos(\phi),r \sin(\phi))$. This transforms the form $\int_{\mathbb{R}^2} f(z_1,z_2) \, d z_1 \, d z_2$ to $\int_0^\infty \int_0^{2 \pi} r \, f(r \cos(\phi),r \sin(\phi)) \, d \phi \, d r$. 

\begin{eqnarray}
& & \int_{\mathbb{R}^2} \Sh( \beta - \angle \boldsymbol{z} ) \, \| \boldsymbol{z} \| \, k_{\sigma} (\frac{\boldsymbol{x}^T \boldsymbol{z} - (\boldsymbol{y}-\boldsymbol{b})^T \nabla f(\boldsymbol{y})}{\| \nabla f(\boldsymbol{y})\|}) \, k_{\sigma^\dag \, \| \nabla f(\boldsymbol{y}) \|}(\boldsymbol{z} - \boldsymbol{A}^T \nabla f (\boldsymbol{y}) ) \, d \boldsymbol{z} \\
&=& \int_{0}^\infty \int_{0}^{2\pi} r \Sh( \beta - \phi ) \, r \, k_{\sigma} (\frac{r \boldsymbol{x}^T \boldsymbol{v}(\phi) - (\boldsymbol{y}-\boldsymbol{b})^T \nabla f(\boldsymbol{y})}{\| \nabla f(\boldsymbol{y})\|}) \, k_{\sigma^\dag \, \| \nabla f(\boldsymbol{y}) \|}(r \boldsymbol{v}(\phi) - \boldsymbol{A}^T \nabla f (\boldsymbol{y}) ) \, d \phi \, d r\\ 
&=& \int_{0}^\infty \, r^2 \, k_{\sigma} (r \frac{ \boldsymbol{x}^T \boldsymbol{v}(\beta)}{\| \nabla f(\boldsymbol{y})\|} + \frac{(\boldsymbol{b}-\boldsymbol{y})^T \nabla f(\boldsymbol{y})}{\| \nabla f(\boldsymbol{y})\|}  ) \, k_{\sigma^\dag \, \| \nabla f(\boldsymbol{y}) \|}(r \boldsymbol{v}(\beta) - \boldsymbol{A}^T \nabla f (\boldsymbol{y}) ) \, d r\\ 
\label{eq:radial_integral}
&=& \frac{e^{-\frac{((\boldsymbol{b}-\boldsymbol{y})^T \tilde{\nabla} f(\boldsymbol{y}))^2}{2 \sigma^2 }-\frac{\|\boldsymbol{A}^T \tilde{\nabla} f (\boldsymbol{y})\|^2}{2 {\sigma^\dag}^2}}  w(- \frac{ \sigma^{-2} \tilde{\nabla}^T f(\boldsymbol{y}) (\boldsymbol{y}-\boldsymbol{b}) \boldsymbol{x}^T \tilde{\boldsymbol{v}} (\beta, \boldsymbol{y})+{\sigma^\dag}^{-2} \tilde{\nabla}^T f (\boldsymbol{y}) \boldsymbol{A} \tilde{\boldsymbol{v}}(\beta, \boldsymbol{y})  }{2 t})}{8 \sqrt{2} \pi^{\frac{3}{2}} \sigma {\sigma^\dag}^2 \, \| \nabla f(\boldsymbol{y}) \|^2 \, t^3} \,, 
\end{eqnarray}

where $\tilde{\nabla} f (\boldsymbol{y}) \triangleq \frac{\nabla f(\boldsymbol{y})}{\| \nabla f(\boldsymbol{y})\|}$, $\tilde{\boldsymbol{v}}(\beta, \boldsymbol{y})\triangleq \frac{\boldsymbol{v}(\beta)}{\| \nabla f(\boldsymbol{y})\|}$, and $t \triangleq \sqrt{\frac{(\boldsymbol{x}^T \tilde{\boldsymbol{v}}(\beta, \boldsymbol{y}))^2}{2 \sigma^2}+\frac{1}{2 {\sigma^\dag}^2 \, \| \nabla f(\boldsymbol{y}) \|^2}}$ and $w(x) \triangleq \sqrt{\pi} e^{x^2} (1+2 x^2 ) \erfc(x) -2 x $. In (\ref{eq:radial_integral}) we use an elementary identity\footnote{
We use the identity,

\begin{equation}
\int_0^\infty r^2 \, k_{\sigma_1,1}(r c_1 + c_2) 
 k_{\sigma_2,2} (r \boldsymbol{c}_3 + \boldsymbol{c}_4) \, dr = \frac{e^{-\frac{c2^2}{2 \sigma_1^2}-\frac{\|\boldsymbol{c}_4\|^2}{2 \sigma_2^2}}  (\sqrt{\pi} (1+2 t_2^2 ) e^{t_2^2} \erfc(t_2) -2 t_2)}{8 \sqrt{2}  \pi^{\frac{3}{2}} \sigma_1 \sigma_2^2 t_1^3}\,,
\end{equation}

for $t_1 \triangleq \sqrt{\frac{c1^2}{2 \sigma_1^2}+\frac{\|\boldsymbol{c}_3\|^2}{2 \sigma_2^2}}$ and $t_2 \triangleq \frac{\frac{c1 c2}{s1^2}+\frac{\boldsymbol{c}_3^T \boldsymbol{c}_4}{\sigma_2^2}}{2 t_1}$. This identity is derived in two steps:

\begin{enumerate}
\item \emc{Completing the square} of the exponent in the integrand.
\begin{equation}
-\frac{(r c_1 + c_2)^2}{2 \sigma_1^2} - \frac{\|r \boldsymbol{c}_3 + \boldsymbol{c}_4\|^2}{2 \sigma_2^2} \quad=\quad -\frac{1}{2} (r+\frac{\boldsymbol{c}_3^T \boldsymbol{c}_4 \sigma_1^2+c_1 c_2 \sigma_2^2}{\|\boldsymbol{c}_3\|^2 \sigma_1^2+c_1^2 \sigma_2^2})^2  (\frac{c_1^2}{\sigma_1^2}+\frac{\|\boldsymbol{c}_3\|^2}{\sigma_2^2})+\frac{1}{2} ( \frac{(c_1 c_2 \sigma_2^2+\sigma_1^2 \boldsymbol{c}_3^T \boldsymbol{c}_4)^2}{\|\boldsymbol{c}_3\|^2 \sigma_1^4 \sigma_2^2+c_1^2 \sigma_1^2 \sigma_2^4} - \frac{c_2^2}{\sigma_1^2}- \frac{\|\boldsymbol{c}_4\|^2}{\sigma_2^2} )\,.
\end{equation}

\item Using the identity about \emc{Gaussian moments},

\begin{equation}
\int_0^\infty r^2 e^{-\frac{(r - a_1)^2}{2 a_2^2}} \, d r = a_1 a_2^2 e^{-\frac{a_1^2}{2 a_2^2}} + \sqrt{\frac{\pi}{2}} a_2 (a_1^2+a_2^2)  (1+\erf(\frac{a_1}{\sqrt{2} a_2})) \,.
\end{equation}

\end{enumerate}
 
}.

\pagebreak

\end{document}